\documentclass{article}
\usepackage[preprint]{neurips_2026}

\usepackage[utf8]{inputenc} % allow utf-8 input
\usepackage[T1]{fontenc}    % use 8-bit T1 fonts
\usepackage{hyperref}       % hyperlinks
\usepackage{url}            % simple URL typesetting
\usepackage{booktabs}       % professional-quality tables
\usepackage{amsfonts}       % blackboard math symbols
\usepackage{nicefrac}       % compact symbols for 1/2, etc.
\usepackage{microtype}      % microtypography
\usepackage{xcolor}         % colors
\usepackage{subfiles}
\usepackage{graphicx}
\usepackage{subcaption}
\usepackage{enumitem}
\usepackage{amsmath}
\usepackage{amssymb}
\usepackage{mathtools}
\usepackage{amsthm}
\usepackage{colortbl}
\definecolor{avggray}{gray}{0.92}
\usepackage{tcolorbox}
\tcbuselibrary{theorems}
\usepackage{pifont}
\usepackage{multirow}
\usepackage{wrapfig}
\usepackage{makecell}
\usepackage{tikz}
\newcommand*\circled[1]{\tikz[baseline=(char.base)]{%
  \node[shape=circle,draw,inner sep=1pt] (char) {\footnotesize #1};}}
\usepackage[ruled,vlined,linesnumbered,commentsnumbered]{algorithm2e}
\SetKwComment{tcp}{/* }{ */}

\SetCommentSty{bluecomment}
\newtcbtheorem[number within=section]{theorem}{Theorem}{
  colback=gray!5, 
  colframe=gray!35!black, 
  fonttitle=\bfseries,
  boxrule=0.5mm
}{th}
\newtcbtheorem[number within=section]{assumption}{Assumption}{
  colback=white, 
  colframe=black, 
  fonttitle=\bfseries,
  boxrule=0.2mm,
  sharp corners
}{ass}
\newtcbtheorem[]{restatedtheorem}{Theorem}{
  colback=gray!5, 
  colframe=gray!35!black, 
  fonttitle=\bfseries,
  boxrule=0.5mm,
  attach boxed title to top left={xshift=0.5cm,yshift=-2mm},
  boxed title style={colback=gray!35!black}
}{rth}
\newtcbtheorem[use counter from=theorem]{lemma}{Lemma}{
  colback=gray!5, 
  colframe=gray!35!black, 
  fonttitle=\bfseries,
  boxrule=0.5mm
}{lem}
\usepackage{tcolorbox}
\tcbuselibrary{breakable, skins}
\definecolor{sffblue}{RGB}{31, 78, 121}    % deep navy
\definecolor{sffblue!10}{RGB}{220, 232, 242} % light tint for background
\tcbset{
  takeaway/.style={
    enhanced,
    colback=sffblue!7,        % very light blue tint
    colframe=sffblue,         % deep navy border
    boxrule=0.9pt,
    arc=4pt,                  % sharp corners — cleaner with left-bar style
    left=6pt, right=6pt, top=4pt, bottom=4pt,
    fontupper=\small,
  }
}
\newcommand{\cmark}{\textcolor{green!70!black}{\checkmark}}
\newcommand{\xmark}{\textcolor{red}{\ding{55}}}
\usepackage[capitalize,noabbrev]{cleveref}
\usepackage[disable,textsize=tiny]{todonotes}

\title{Unveiling the Depth-Performance Dilemma in Split-Federated Fine-tuning of LLMs}

\author{%
  Hariharan Ramesh\,\textsuperscript{1} \qquad
  Someshwaran Murugaiyan\,\textsuperscript{2} \qquad
  Jyotikrishna Dass\,\textsuperscript{1} \\[0.45em]
  \normalfont
  \textsuperscript{1}School of Electrical, Computing \& Software Engineering,
  University of Arizona, Tucson, AZ, USA \\
  \textsuperscript{2}Vellore Institute of Technology, Vellore, India \\[0.35em]
  \normalfont
  \texttt{hariharanr@arizona.edu}\quad 
  \texttt{someshwaran.2022@vitstudent.ac.in}\quad
  \texttt{jdass@arizona.edu} 
}
\begin{document}
\maketitle
\begin{abstract}
Split Federated Fine-tuning (SFF) is a promising paradigm for scaling Large Language Models (LLMs) by partitioning model depth between resource-constrained clients and a centralized server. While system incentives for throughput and privacy favor deep partitions, the impact of such configurations on model utility remains poorly understood. In this work, we identify and characterize the Depth-Performance Dilemma: the regime that maximizes system efficiency is precisely where fine-tuning quality collapses. Through a comprehensive audit across four model scales (GPT-2 to Llama-3-8B) and diverse benchmarks, we demonstrate that deeper partitions provide monotonic gains in throughput and privacy at the cost of catastrophic performance plateaus. We evaluate a suite of state-of-the-art federated adapter aggregation methods including \textsc{Avg}, \textsc{Stack}, \textsc{SVD}, and \textsc{Freeze}, revealing that while these techniques are effective in standard Federated Learning, they fail to mitigate the artifacts unique to split architectures. Finally, we provide a mechanistic diagnosis for this failure, tracing the collapse to the near-isometric topology of Transformers, which allows aggregation noise to propagate without attenuation until it triggers Attention Collapse in the server partition. Our findings challenge the prevailing assumption that partition depth is a utility-neutral tuning knob and provide a structural foundation for stable distributed LLM fine-tuning.
\end{abstract}

\section{Introduction}

Large Language Models (LLMs) face a fundamental deployment challenge: privacy-sensitive environments necessitate Federated Learning (FL) \cite{pmlr-v54-mcmahan17a}, yet on-device fine-tuning remains computationally prohibitive even with Parameter-Efficient Fine-Tuning (PEFT) like LoRA \cite{DBLP:journals/corr/abs-2106-09685}. Furthermore, fully decentralized training often suffers from the statistical instability of non-IID client drift. Split-Federated Fine-tuning (SFF) \cite{Thapa_Mahawaga_Arachchige_Camtepe_Sun_2022} resolves this by partitioning the model at a cut layer: clients execute shallow layers in parallel while a shared Main Server sequentially processes the remaining deep blocks (illustrated in Figure~\ref{fig:SFF}). By synchronizing client-side adapters via a Federated Server, SFF enables memory-feasible, privacy-aware optimization across distributed datasets.

Historically, split architectures were validated on Convolutional Neural Networks (CNNs), where pooling layers create an information "funnel" \cite{10.1145/3065386} that attenuates representations at depth. This property makes partition depth a convenient efficiency knob in CNN-based SFL, with manageable performance trade-offs. Recent LLM frameworks like SplitLoRA \cite{lin2024splitlorasplitparameterefficientfinetuning} and HSplitLoRA \cite{lin2025hsplitloraheterogeneoussplitparameterefficient} implicitly extrapolate this intuition, assuming deep cuts remain "safe" for Transformers. However, Transformers maintain uniform dimensionality across all layers, lacking the spatial downsampling that stabilizes deep partitions in CNNs. Despite this structural divergence, the relationship between cut-layer depth and SFF performance for LLMs remains systematically uncharacterized.

\begin{wrapfigure}{r}{0.5\textwidth}
    \centering
    \vspace{-1em}\includegraphics[width=1\linewidth]{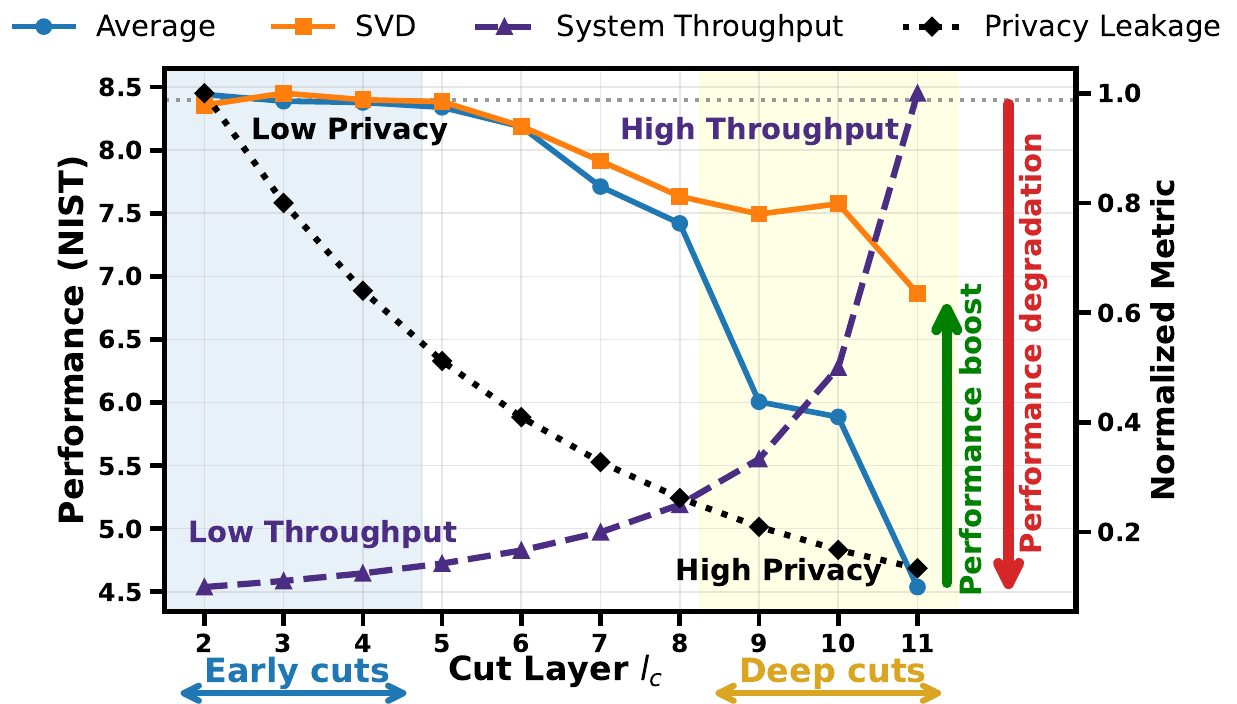}
    \caption{\textbf{Depth-Performance Dilemma.} As partition depth
    (cut layer $\ell_c$) increases, system throughput rises and privacy leakage
    decays, yet task performance (NIST, left axis) degrades consistently.
    Throughput and normalized privacy leakage (right axis) are reported alongside
    performance for GPT-2 Small under heterogeneous clients. The throughput curve
    corresponds to the server-bound regime, where the throughput-optimal depth
    $d^{\star}{=}K/(K+\rho_s/\rho_c)$ meets or exceeds the deepest cut so the
    server is the bottleneck at every $\ell_c$; see Appendix~\ref{app:throughput}.}
    %as functions of $\ell_c$ for GPT-2 Small under heterogeneous clients.} The operating regime that is optimal for system efficiency and privacy is precisely where fine-tuning quality collapses.}
    \label{fig:depth_dilemma}
    %\vspace{-5pt}
\end{wrapfigure}

In this work, we conduct a comprehensive empirical evaluation of SFF across diverse model scales and benchmarks. Through exhaustive cut-layer sweeps and multiple aggregation strategies, we identify a central \textbf{Depth-Performance Dilemma} (Figure~\ref{fig:depth_dilemma}): \textit{while deepening the cut layer increases system throughput and reduces privacy leakage, model performance degrades consistently across every tested configuration}. Figure~\ref{fig:depth_dilemma} makes this tension concrete for GPT-2 Small: as the cut layer $\ell_c$ deepens from early to deep cuts, system throughput rises steeply and privacy leakage decays toward zero (right axis), while task performance (NIST, left axis) falls consistently in the opposite direction. Because these two system-favorable trends move against the utility curve, the operating regime most attractive for system efficiency and privacy is precisely where fine-tuning quality collapses. We trace this collapse to a mismatch between federated aggregation and Transformer topology. While shallow cuts allow the server to absorb aggregation noise, deep cuts propagate noise through the residual architecture into ``collapsing'' attention layers in the server tail. This establishes the dilemma as intrinsic property of split-LLM under heterogeneity rather than a mere optimization failure.

\textbf{Contributions.} In summary, our contributions are as follows:
\begin{enumerate}[nosep, leftmargin=*]
    \item \textit{Characterization of the Depth-Performance Dilemma:} We identify a fundamental tension in SFF where partition depths that maximize throughput and privacy consistently collapse fine-tuning quality. We show this trade-off holds consistently across four model scales (GPT-2 Small/Med/Large, Llama-3-8B) and three diverse benchmarks (E2E NLG, GLUE, GSM8K).% validated via adversarial MLP inversion attacks and system profiling.

    \item \textit{Systematic Aggregation Audit:} We provide the first formal evaluation of \textsc{Average}~\cite{zhang2023towards}, \textsc{Freeze}~\cite{wang2023privateloraefficientprivacypreserving}, \textsc{Stack}~\cite{flora}, and \textsc{SVD}~\cite{bai2024federated} in the split-federated setting. We characterize how their unique noise profiles interact with partition depth and client heterogeneity across exhaustive cut-layer sweeps.

    \item \textit{Mechanistic Diagnosis:} Using perturbation propagation and effective rank analysis, we trace the collapse to three compounding structural factors: aggregation noise, near-isometric noise propagation through the Transformer's residual architecture, and attention collapse in deep server layers that eliminates the nonlinear capacity required for error correction.

\end{enumerate}

\section{Related Work and Gaps}

We position our work at the intersection of Split Learning (SL), LLM fine-tuning, and Federated aggregation theory, identifying a gap in noise-resilient aggregation for deep-cut Transformer topologies. Table~\ref{tab:comparison} summarizes how SFF differs from existing paradigms.

\begin{table*}[t!]
\caption{\textbf{The SFL Landscape.} Comparison of Split Learning approaches. While CNN-based works have extensively analyzed role of cut layer depth, LLM-based works have largely treated the cut layer as a static efficiency knob or privacy boundary. \textbf{Ours} \textbf{is the first to systematically audit the impact of deep partitions in Transformers performance across various aggregation strategies}.
(\(D\) denotes cut layer depth and \(P\) denotes performance;
arrows indicate trend direction trends as depth increases.)}
\label{tab:comparison}

\centering
\small
\renewcommand{\arraystretch}{1.2}
\setlength{\tabcolsep}{4pt}
\begin{sc}
\resizebox{\textwidth}{!}{
\begin{tabular}{l c c c c} % c}
\toprule
%{\scshape \centering{Works}} &
\multicolumn{1}{c}{\scshape Works} &
{\scshape Model} &
{\scshape Aggregation} &
{\scshape Depth Analysis} &
{\scshape LoRA Rank Hetero.} \\ %&
%{\scshape Convergence Analysis} \\
\midrule
SFL \cite{Thapa_Mahawaga_Arachchige_Camtepe_Sun_2022} & CNNs & \cmark\ (Avg) & \xmark & N/A \\ % & \cmark \\
Analysis~\cite{10.5555/3737916.3741203} & CNNs & \cmark\ (Avg) & $D \uparrow P \uparrow$ & N/A \\ 
% & \cmark \\
Analysis~\cite{dachille2024impactcutlayerselection} & CNNs & \cmark\ (Avg) & \ $D \uparrow P \downarrow$ & N/A \\ 
%  & \cmark \\
\midrule
SplitLoRA~\cite{lin2024splitlorasplitparameterefficientfinetuning} & LLMs & \cmark\ (Avg) & \xmark & \xmark \\ % & \xmark \\
SplitFrozen~\cite{ma2025splitfrozensplitlearningdeviceside} & LLMs & \cmark\ (Avg) & \xmark & \xmark \\ % & \xmark \\
FSL-SAGE~\cite{nair2025fslsage} & CNN/LLM & \cmark\ (Avg) & \xmark & \xmark \\ % & \cmark \\
HSplitLoRA~\cite{lin2025hsplitloraheterogeneoussplitparameterefficient} & LLMs & \cmark\ (Stack) & \xmark & \cmark \\ % & \cmark \\
\midrule
\rowcolor{gray!15}
\textbf{Depth-Performance Dilemma (Ours)} & \textbf{LLMs} & \textbf{Average, Freeze, Stack, SVD} &
$D \uparrow P \downarrow$ & \textbf{\cmark} \\ % & \textbf{\cmark} \\
\bottomrule
\end{tabular}
}
\end{sc}
\end{table*}

\textbf{Split Federated Learning (SFL).} SL~\cite{DBLP:journals/corr/abs-1812-00564} decouples model depth from client constraints via a cut layer. Originally validated on CNNs, SL relies on spatial pooling to reduce bandwidth and filter noise. SFL extensions (SFL-V1/V2)~\cite{Thapa_Mahawaga_Arachchige_Camtepe_Sun_2022} optimize synchronization but retain the structural premise that partition depth is a tunable knob for efficiency, ostensibly independent of model topology. While CNN-based audits~\cite{dachille2024impactcutlayerselection} and theoretical analyses~\cite{10.5555/3737916.3741203} suggest depth-invariance, these insights are coupled to CNN-specific inductive biases (spatial downsampling) that are absent in Transformers.

\begin{figure}[t]
    \centering
    \includegraphics[width=0.85\linewidth]{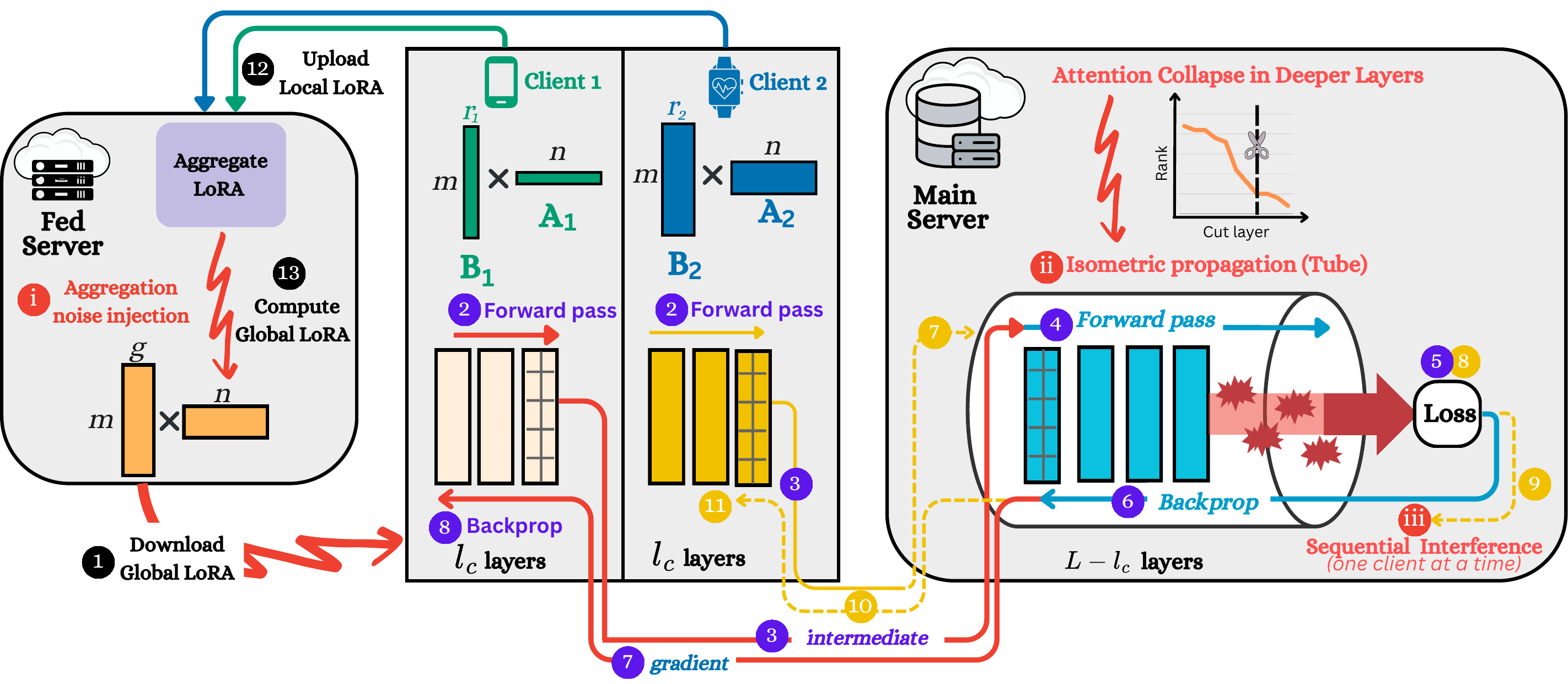}
    \caption{\textbf{Split-Federated Fine-tuning of LLMs.} Clients parallelize the shallow partition while the Main Server sequentially processes the shared deep blocks. Fine-tuning performance degrades consistently with cut-layer depth due to: (i) aggregation noise from heterogeneous LoRA updates; (ii) near-isometric propagation where residual connections transmit noise unattenuated; and (iii) attention collapse where deep server layers lack the rank and nonlinear capacity to correct errors.}
    \label{fig:SFF}
    \vspace{-1em}
\end{figure}

\textbf{Split LLMs.} Recent adaptations of SL to Transformers prioritize system feasibility over topological (structural) stability. Enabling frameworks like SplitLoRA~\cite{lin2024splitlorasplitparameterefficientfinetuning} and HSplitLoRA~\cite{lin2025hsplitloraheterogeneoussplitparameterefficient} implement distributed fine-tuning, while optimizations like SplitFrozen~\cite{ma2025splitfrozensplitlearningdeviceside} and SplitLLM~\cite{Chen_2024} focus on reducing device load or inference latency. Privacy-centric works~\cite{10.24963/ijcai.2025/57, wang2023privateloraefficientprivacypreserving} treat the cut layer strictly as a security boundary. Crucially, these studies tacitly treat partition depth as utility-neutral, ignoring how the constant-width near-isometric structure of a Transformer propagates aggregation artifacts.

\textbf{Federated Adapter Aggregation.} Aggregating LoRA adapters presents a unique challenge: standard averaging (\textsc{Avg}) is mathematically ill-suited for factorized matrices ($\mathbf{W + BA}$) due to the non-linearity of the product: $\mathbb{E}[\mathbf{B}]\mathbb{E}[\mathbf{A}] \neq \mathbb{E}[\mathbf{BA}]$~\cite{zhang2023towards, cho-etal-2024-heterogeneous}. While FL-centric solutions like \textsc{Stack}~\cite{flora}, \textsc{Freeze}~\cite{sun2024improving}, and \textsc{SVD}~\cite{bai2024federated, ramesh2025floristsingularvaluethresholding} aim to restore mathematical accuracy, reduce communication overhead, or support heterogeneous ranks, they have been evaluated exclusively in standard Federated setups. Consequently, their impact on Split Federated Finetuning remains unexamined, particularly in the context of a shared server partition where aggregation artifacts directly influence the sequential processing of deep blocks.

% \noindent \textbf{Open Challenges.} Two gaps remain unaddressed. First, \textit{no existing work has examined whether the constant-width, isometric topology of Transformers invalidates the CNN-derived assumption that partition depth is a utility-neutral tuning knob}. Second, \textit{LoRA aggregation methods designed for standard FL have never been evaluated in a split setting, where aggregation noise enters a server partition of diminishing capacity as depth increases}. We address both gaps below.

\textbf{Gaps.} We summarize two main gaps as follows: (i) \textit{Topological Sensitivity:} determining whether increasing the split depth for system efficiency creates a tipping point that disrupts the LLM's performance, and (ii) \textit{LoRA Aggregation Impact:} While methods exist to combine client updates under federated setup, they have not been evaluated and compared in a split-model context with a shared server partition with sequential updates.

%attempt to be mathematically accurate, improve communication efficiency with half the number of adapters, and support heterogeneous LoRA ranks, respectively, they have been evaluated solely in standard Federated fine-tuning setups. Hence, their impact on performance of SFF of LLMs remains unexamined especially with shared server model partition with sequential processing.

%without split LLM architecture. In SFF, this aggregation noise immediately corrupts the forward pass of the sequential server partition, a phenomenon we identify as the fundamental driver of performance collapse.
\section{The Depth-Performance Dilemma}
\label{sec:dilemma}

To address the above gaps, we begin by establishing the operational case for deep partitioning in SFF
(Section~\ref{sec:incentives}), then demonstrate empirically that the very
depths favored by system incentives are precisely where fine-tuning quality
collapses (Section~\ref{sec:dilemma_finding}). This tension, shown in Figure~\ref{fig:depth_dilemma}, which we term
the \textbf{Depth-Performance Dilemma}, motivates the diagnostic
investigation in Section~\ref{sec:analysis}.
 
%==========================================================================
\subsection{System Benefits of Deep Cuts}
\label{sec:incentives}
%==========================================================================
 
An overview of the SFF pipeline is shown in Figure~\ref{fig:SFF}; full
pseudocode is in Appendix~\ref{app:pseudocode}. We consider $N$ clients
$\mathcal{C}=\{1,\dots,N\}$, each holding a private dataset $\mathcal{D}_i$.
The model is partitioned at cut layer $\ell_c$: client $i$ executes the
shallow sub-model $f_C(\cdot;\Theta_{C,i})$ through the first $\ell_c$
Transformer blocks, while a shared \emph{Main Server} executes the remaining
deep sub-model $f_S(\cdot;\Theta_S)$. Clients transmit cut-layer activations
to the server and receive activation gradients in return, following the
parallel execution model of
SplitLoRA~\cite{lin2024splitlorasplitparameterefficientfinetuning}. A separate
\emph{Federated Server} periodically aggregates client-side
LoRA~\cite{DBLP:journals/corr/abs-2106-09685} adapters and broadcasts a
global adapter back to all clients.

\textbf{Impact on Throughput.}
The primary constraint in SFF is the \emph{sequential server bottleneck}: the
Main Server processes client activations one at a time, so server-side compute
directly throttles throughput. Unlike CNNs, where pooling layers create
irregular compute profiles across depth, Transformer blocks are \emph{uniform}
in hidden dimension and cost, every layer shifted to the client yields a
precise, predictable reduction in server load. Formally
(Appendix~\ref{app:throughput}), in the server-bound regime
$d < d^{\star}{=}K/(K+\rho_s/\rho_c)$, throughput scales as
$\mathcal{T}(d) = \Theta\!\left(\frac{1}{1-d}\right)$ with $d = \ell_c/L$, so
moving from $d{=}0.5$ to $d{=}0.9$ delivers up to a $5\times$ gain. For realistic
client counts and compute ratios, $d^{\star}$ can reach the deepest cut (e.g.,
$d^{\star}{=}0.91$ at $K{=}10$, $\rho_s/\rho_c{=}1$; Table~\ref{tab:dstar}),
keeping deep cuts throughput-favorable; where it does not, the deep-cut incentive
is carried by privacy, which improves with depth independent of $K$ and compute.

\textbf{Impact on Privacy.} The cut-layer representation $\mathbf{h}_{\ell_c}$ is the only signal the
server ever observes about a client's private input. At shallow cuts these
representations are close to raw token embeddings and can be inverted with
high fidelity; at deep cuts, multi-head attention has progressively diffused
token identity into the global context. 

\begin{wrapfigure}{r}{0.45\textwidth}
    \vspace{-2em}
    \centering
    \includegraphics[width=0.9\linewidth]{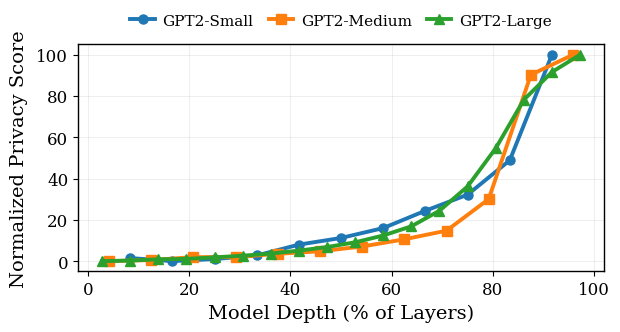}
    \caption{\textbf{Normalized Privacy Score vs.\ partition depth.}
    Score$\,{=}\,(1{-}\mathrm{RR}){\times}100$, where RR is token recovery
    rate of an adversarial MLP inversion model. Score rises sharply beyond
    ${\approx}80\%$ depth and reaches 100 at deep cuts.}
    \label{fig:ep_vs_depth}
    \vspace{-2em}
\end{wrapfigure}

We validate this directly by training
adversarial MLP inversion models that reconstruct input tokens from cut-layer
hidden states (setup in Appendix~\ref{app:eia}). We define \emph{Empirical
Privacy} EP$\,{=}\,(1{-}\mathrm{RR}){\times}100$, where RR is token recovery
rate of the strongest attacker. Figure~\ref{fig:ep_vs_depth} shows EP across GPT-2 Small, Medium, and Large.
At shallow cuts ($d < 0.4$), EP remains near zero, tokens are recoverable
with high accuracy from the cut-layer hidden states. Beyond $d \approx 0.8$,
EP rises sharply and reaches 100 at deep cuts ($d > 0.9$), indicating that
representations at this depth carry negligible recoverable information about the original input, consistent
across all three GPT-2 scales and, at 8B scale, for Llama-3-8B-Instruct
(Appendix~\ref{app:eia}, Figure~\ref{fig:ep_vs_depth_llama}).

% End of 3.1
\begin{tcolorbox}[takeaway]
\textbf{Key Takeaway:} 
Deeper partitions maximize both throughput and privacy. These dual incentives position deep partitioning as the unambiguous system-optimal regime. %Deeper partitions deliver monotonically higher throughput and stronger privacy guarantees. Both incentives point unambiguously toward deeper partitions as the system-optimal operating regime.
\end{tcolorbox}

%==========================================================================
% \subsection{The Cost of Deep Partitioning}
\subsection{Impact of Depth on Fine-tuning Performance}
\label{sec:dilemma_finding}

\paragraph{Experimental setup.}
We evaluate SFF across four models, three benchmarks, and up to 100 clients,
making this the most extensive empirical study of cut-layer depth in
split-federated LLM fine-tuning to date. We evaluate SFF across GPT-2 Small/Medium/Large~\cite{radford2019language} on
E2E NLG~\cite{novikova-etal-2017-e2e} and Llama-3-8B-Instruct~\cite{llama3}
on GSM8K~\cite{gsm8k}, with cut layers swept exhaustively ($\ell_c \in
\{1,\dots,L{-}1\}$ for GPT-2; stride-3 sweep for LLaMA; 30-client
federated setup with 3 sampled per round for 50 communication rounds). All experiments use
non-IID data (Dirichlet, $\alpha{=}0.5$), with both heterogeneous
($r_i \in \{4,8,16\}$) and homogeneous ($r{=}16$) LoRA rank configurations.
Generalization to natural language understanding is validated on
GLUE~\cite{DBLP:journals/corr/abs-1804-07461} (GPT-2 Small, 100 clients); results are in
Appendix~\ref{app:glue}. Performance is measured by PPL and task metrics
(BLEU, NIST, METEOR, ROUGE-L, CIDEr for E2E; accuracy for GSM8K and GLUE).
Full hyperparameter details and complexity comparisons are in
Appendix~\ref{app:exp_details} and Appendix~\ref{app:complexity_analysis}.
 
\paragraph{Aggregation protocols.}
Aggregation design in SFF remains largely underexplored. To date, only two
strategies have been studied in prior SFL works:
SplitLoRA~\cite{lin2024splitlorasplitparameterefficientfinetuning}, which
applies simple averaging, and
HSplitLoRA~\cite{lin2025hsplitloraheterogeneoussplitparameterefficient}, which
uses stacking to handle heterogeneous ranks. Beyond these, we conduct the
\emph{first systematic audit} of representative federated LoRA aggregation
methods in the split setting, a contribution in itself, since these methods
were designed and evaluated exclusively for standard (non-split) FL. We apply
LoRA to the query and value projection matrices of each attention block,
parameterizing updates as $\Delta\mathbf{W} = \mathbf{B}\mathbf{A}$ where
$\mathbf{B} \in \mathbb{R}^{d\times r}$, $\mathbf{A} \in \mathbb{R}^{r\times d}$.
Every $I=100$ local steps, client adapters are synchronized via one of four
strategies: \textbf{(i)~\textsc{Average}~\cite{zhang2023towards}} averages adapters
element-wise, introducing cross-term interference
($\mathbf{B}_\mathrm{agg}\mathbf{A}_\mathrm{agg} \neq \sum_i\alpha_i
\mathbf{B}_i\mathbf{A}_i$).
\textbf{(ii)~\textsc{Freeze}~\cite{sun2024improving}} fixes a shared projection
$\mathbf{A}_\mathrm{fixed}$ and trains only $\mathbf{B}$, eliminating
cross-term noise at the cost of halved adaptation capacity.
\begin{wrapfigure}{r}{0.45\textwidth}
    \vspace{-1em}
    \centering
    \includegraphics[width=\linewidth]{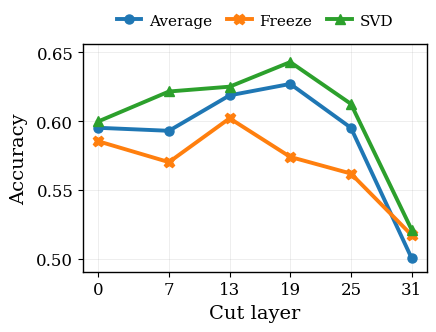}
    \caption{\textbf{GSM8K accuracy vs.\ cut-layer depth} for Llama-3-8B-Instruct. %under three aggregation methods. Performance degrades substantially at deep cuts across all methods.
    }
    \label{fig:llama_acc}
    \vspace{-3em}
\end{wrapfigure}
\textbf{(iii)~\textsc{Stack}~\cite{flora}} concatenates adapters, preserving
all client subspaces exactly but scaling the global rank as $N\times r$.
\textbf{(iv)~\textsc{SVD}~\cite{bai2024federated}} (following FlexLoRA)
aggregates in update space, $\Delta\mathbf{W}_\mathrm{agg} = \sum_i
\alpha_i\mathbf{B}_i\mathbf{A}_i$, applies SVD, and delivers
client-specific global adapters
$\mathbf{B}_k^{(g)} = \mathbf{U}_{[:,{:}r_k]}\boldsymbol{\Sigma}_{[{:}r_k,{:}r_k]}$,
$\mathbf{A}_k^{(g)} = \mathbf{V}^\top_{[{:}r_k,:]}$, truncated to each
client's rank $r_k$. Methods requiring aligned factors apply rank alignment via
zero-padding following HetLoRA~\cite{cho-etal-2024-heterogeneous}.
\paragraph{Depth degrades performance across all aggregation methods.}
Figure~\ref{fig:gpt2_s_m_grid} reports generation metrics across cut layers
for GPT-2 Small and Medium; Figure~\ref{fig:ppl_vs_cutlayer} reports validation
PPL for all three GPT-2 scales; and Figure~\ref{fig:llama_acc} reports GSM8K
accuracy for Llama-3-8B-Instruct. Across every model, benchmark, and aggregation
method, performance at deep cuts is substantially lower than at shallow cuts. GSM8K accuracy plateaus at shallow-to-moderate cuts before declining at deep cuts, consistently across all aggregation methods. Across both GPT-2 and LLaMA, the deep-cut regime, which maximizes throughput and privacy, consistently incurs the largest performance penalty. The degradation is not uniform across methods: \textsc{Freeze} collapses most sharply at deep cuts due to its constrained adaptation subspace; \textsc{Average} degrades steadily; and \textsc{Stack} and \textsc{SVD} are more robust, though neither is immune. The key observation is that \emph{no aggregation strategy escapes depth-induced degradation}, the dilemma is not an artifact of any single method's design. The rate and severity of collapse does, however, depend on the aggregation rule, a distinction we
pursue in Section~\ref{sec:analysis}.

Figure~\ref{fig:convergence} confirms this is not a convergence failure:
training loss converges in every configuration, but deeper cuts converge to
worse optima. The shallow--deep separation in PPL emerges within the first few
communication rounds and widens throughout training.
 
\begin{figure}[t]
  
  \centering  
  \vspace{-1em}\includegraphics[width=\textwidth]{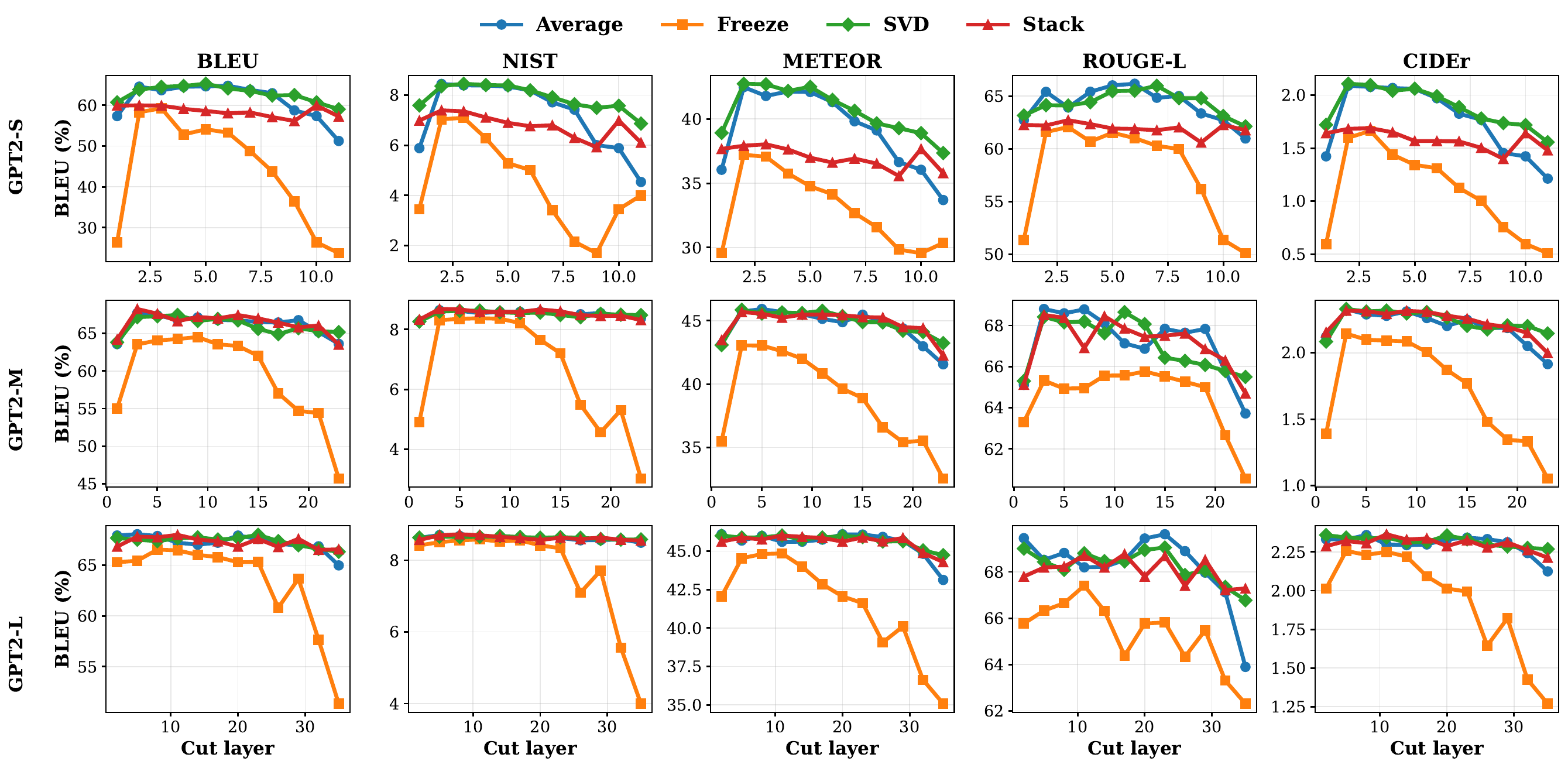}
  \caption{\textbf{Generation quality across cut-layer depth and aggregation
  method.} GPT-2 Small (top) and Medium (bottom) performance across cut layers
  for five metrics (BLEU, NIST, METEOR, ROUGE-L, CIDEr; BLEU/METEOR/ROUGE-L
  in \%). Across all methods, performance at deep cuts is substantially lower
  than at shallow cuts; the rate of collapse varies by method but no method is
  immune. The performance cliff aligns with the attention collapse thresholds
  identified in Figure~\ref{fig:rank_collapse}.}
  \label{fig:gpt2_s_m_grid}
\end{figure}

\begin{figure}[t]
    \centering
    \begin{subfigure}[t]{0.48\linewidth}
        \centering
        \includegraphics[width=\linewidth]{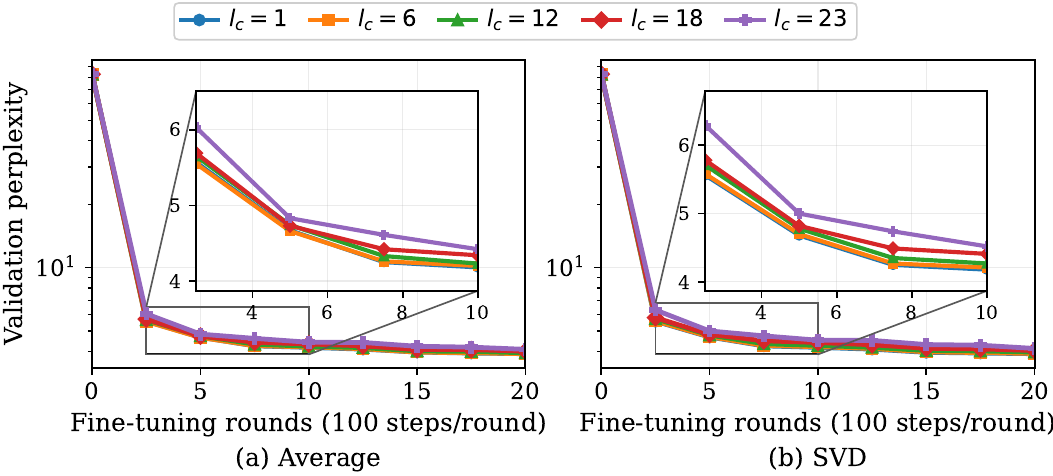}
        \caption{GPT-2 M - Homogeneous ($r{=}8$)}
        \label{fig:convergence_homo}
    \end{subfigure}
    \hfill
    \begin{subfigure}[t]{0.48\linewidth}
        \centering
        \includegraphics[width=\linewidth]{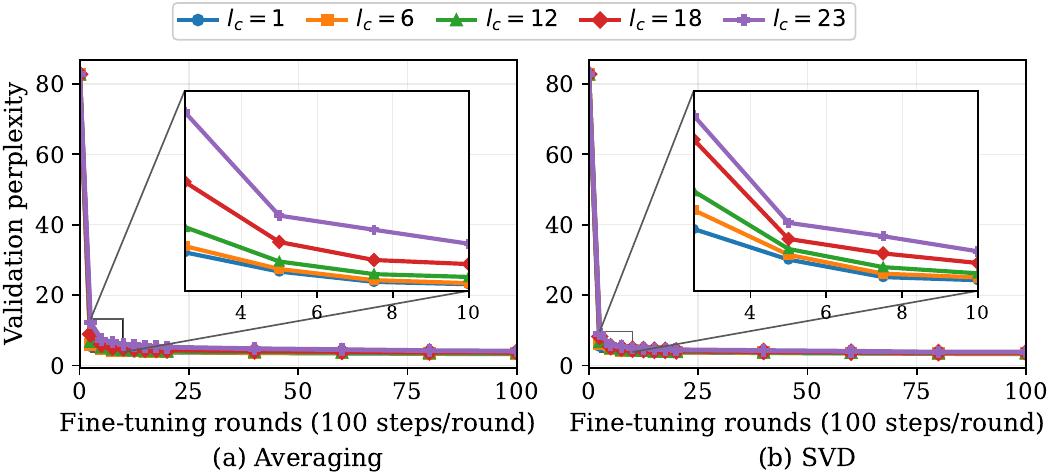}
        \caption{GPT-2 M - Heterogeneous}
        \label{fig:convergence_heter}
    \end{subfigure}
    \caption{\textbf{Convergence under increasing cut-layer depth.}
    Validation perplexity (PPL; lower is better) across SFF rounds
    (100 steps/round) for GPT-2 Medium under (a) homogeneous and
    (b) heterogeneous LoRA rank configurations. Main panels use log-$y$ scale;
    insets zoom early training with linear-$y$. Deeper partitions exhibit slower convergence and higher plateaued PPL. This degradation is amplified under heterogeneity, identifying federated aggregation as the primary failure mode. %Deeper cuts converge more slowly and plateau at substantially higher PPL; the effect is markedly strongerunder heterogeneity, implicating federated aggregation as the dominant failure mode.
    }
    \label{fig:convergence}
\end{figure}

% End of 3.2
\begin{tcolorbox}[takeaway]
\textbf{Key Takeaway:} 
%Section~\ref{sec:incentives} establishes that throughput and privacy both favor deeper partition. Section~\ref{sec:dilemma_finding} shows that model performance degrades across depth. This is the \textbf{Depth-Performance Dilemma}: the system-optimal operating regime is precisely where fine-tuning quality collapses. Section~\ref{sec:analysis} investigates why.
Section~\ref{sec:incentives} demonstrates that system throughput and data privacy both improve with deeper partitioning. Conversely, Section~\ref{sec:dilemma_finding} reveals that fine-tuning performance degrades consistently across depth. This establishes the \textbf{Depth-Performance Dilemma}: the system-optimal operating regime is precisely where model quality collapses. Section~\ref{sec:analysis} provides a mechanistic investigation into the structural causes of this failure.
\end{tcolorbox}

% These observations raise three concrete questions that guide the analysis in
% Section~\ref{sec:analysis}:
% \textbf{[RQ1]} Does the choice of aggregation method determine the severity of
% depth-induced collapse, or is degradation intrinsic to the partition regardless
% of how adapters are combined?
% \textbf{[RQ2]} If aggregation noise drives the collapse, how does it propagate
% through the server partition, does the Transformer attenuate or preserve it?
% \textbf{[RQ3]} What structural property of deep server layers prevents recovery
% from incoming noise, and what compounds this failure at depth?

% \subfile{4-analysis-v1}
%==========================================================================
\section{Diagnosing the Depth-Performance Dilemma}
\label{sec:analysis}
%==========================================================================

The experiments in Section~\ref{sec:dilemma} establish that deep cuts
consistently produce the worst performance across all models and aggregation
methods, and that the severity of collapse varies by aggregation strategy. We investigate three research questions, each building
naturally on the previous:

\smallskip
\noindent\textbf{[RQ1].} \textbf{\textit{Is performance collapse driven by
aggregation method, or is it intrinsic to partition depth?}} 
In Section~\ref{sec:q1}, we compare all four strategies under identical conditions. We observe two distinct regimes: at shallow cuts, aggregation choice has negligible impact; at deep cuts, methods diverge substantially and consistently across all models and tasks.  These findings demonstrate that the specific mechanism utilized to combine adapters assumes critical importance at greater depths, raising the fundamental question of what constitutes this operational dynamic.

This establishes that something about
\emph{how adapters are combined} becomes critically important at depth, and
naturally asks what that something is.

\noindent\textbf{[RQ2].} \textbf{\textit{Why do aggregation methods diverge at deep
cuts but not shallow ones?}}\\ Section~\ref{sec:q2} answers this by characterizing
the noise each method produces when combining heterogeneous adapters, then
showing that the Transformer server, unlike a CNN, propagates this noise
intact from cut point to output rather than attenuating it. Shallow cuts
tolerate any noise level because the long server partition provides a buffer;
deep cuts do not, because the buffer has nearly vanished.

\noindent\textbf{[RQ3].} \textbf{\textit{Why does the server lack the capacity to
correct incoming noise at depth?}}\\ Section~\ref{sec:q3} shows that deep server
layers degenerate into near-identity mappings, a phenomenon we observe
empirically via effective rank analysis, eliminating the nonlinear processing
capacity that would otherwise provide implicit error correction. Sequential
server updates compound this by forcing client gradients into a low-dimensional
manifold where they inevitably conflict.
\smallskip

%==========================================================================
\subsection{Impact of Aggregation on Depth-Induced Degradation}
\label{sec:q1}
%==========================================================================

To address \textbf{[RQ1]}, we isolate the aggregation method as the sole
variable: model, dataset, client count, and cut layer are held fixed across
conditions. Server topology and sequential update dynamics are therefore
identical; any performance divergence must be attributed to how adapters are
combined at the Federated Server.

Figure~\ref{fig:ppl_vs_cutlayer} and Table~\ref{tab:gpt2_shallow_deep_metrics}
report results across all cut-layer depths for GPT-2 Small, Medium, and Large,
with a Federated Learning (FL) baseline included for reference, representing
the case where the entire model is aggregated at every round with no server
partition.

\paragraph{Shallow cuts: aggregation-invariant regime.}
At shallow cuts, aggregation method has little impact on final performance.
For GPT-2 Small, \textsc{Average}, \textsc{Stack}, and \textsc{SVD} cluster
tightly with AVG5 between 0.564 and 0.617; similar groupings appear for Medium
(0.661--0.665) and Large (0.675--0.680). When the server retains many layers,
the residual depth is sufficient to absorb whatever the Federated Server
delivers, regardless of how adapters were combined.

\paragraph{Deep cuts: aggregation-sensitive regime.}
At depth, the picture changes decisively. Performance diverges substantially
across methods and the divergence grows with $\ell_c$. For GPT-2 Small:
\textsc{SVD} achieves AVG5~$=0.570$, \textsc{Stack} $=0.528$,
\textsc{Average} $=0.498$, \textsc{Freeze} $=0.310$. GPT-2 Medium and Large
follow the same ordering, as does Llama-3-8B-Instruct on GSM8K
(Table~\ref{tab:llama_results}). \textit{If the choice of aggregation method
were irrelevant at depth, all methods would degrade identically, they do
not.} The systematic ordering across all models and tasks confirms that
aggregation is a controllable driver of the collapse. What makes one method
degrade less than another is the focus of Section~\ref{sec:q2}.

\begin{wraptable}{r}{0.45\textwidth}
\vspace{-6pt}
\centering
\small
\caption{\textbf{Llama-3-8B GSM8K accuracy.}
Shallow and deep cuts correspond to $l_c=1$ and $l_c=31$; FedB is a standard Federated Learning baseline with no server partition.}
\label{tab:llama_results}
\setlength{\tabcolsep}{4pt}
\begin{tabular}{lccc}
\toprule
\textbf{Method} & \textbf{Shallow} & \textbf{Deep} & \textbf{FedB} \\
\midrule
\textsc{Average} & \cellcolor{green!50}0.595 & 0.500 & 0.420 \\
\textsc{Freeze}  & \cellcolor{green!50}0.585 & 0.517 & 0.420 \\
\textsc{SVD}     & \cellcolor{green!50}\textbf{0.600} & \textbf{0.521} & \textbf{0.428} \\
\bottomrule
\end{tabular}
\vspace{-6pt}
\end{wraptable}

Note that every method still degrades with depth to some degree, even
\textsc{SVD} drops from AVG5~$=0.617$ (shallow) to $0.570$ (deep) on GPT-2
Small. This residual gap, present regardless of aggregation choice, points to
a structural property of the Transformer server that no aggregation method can
address, examined in Section~\ref{sec:q3}.

\paragraph{FL baseline: the limiting case.}
Strikingly, \emph{every SFF configuration outperforms standard FL}, including
deep cuts. In FL, the entire model is subject to aggregation noise at every
round with no server partition to provide any downstream processing. In SFF,
even the deepest cut retains at least one server layer between the noise
injection point and the prediction head. The FL baseline thus represents the
true lower bound of the aggregation-noise regime, and SFF's advantage over it
narrows as the cut deepens and the server partition shrinks toward zero. This
pattern holds at 8B scale: on GSM8K, the deep-cut accuracies of $0.500$, $0.517$,
and $0.521$ for \textsc{Average}, \textsc{Freeze}, and \textsc{SVD} all exceed
the corresponding FL baseline of $0.420$, $0.420$, and $0.428$
(Table~\ref{tab:llama_results}).

\begin{tcolorbox}[takeaway]
\textbf{Key Takeaway:} Aggregation choice is irrelevant at shallow cuts but
decisive at deep cuts, where methods diverge substantially and consistently
across all models and tasks. Every SFF configuration outperforms standard FL,
confirming that even a thin server partition provides meaningful buffering.
The systematic ordering across methods raises the question of what distinguishes
them, which we answer in Section~\ref{sec:q2}.
\end{tcolorbox}

\begin{table*}[t]
\centering
\small
\caption{\textbf{GPT-2 performance at shallow and deep cuts, with FL baseline.}
Shallow (resp.\ deep) values average over the first (resp.\ last) 25\% of cut
positions. FedB is a standard Federated Learning baseline where the full model
is aggregated at every round with no server partition. Metrics: B=BLEU (\%),
N=NIST, M=METEOR (\%), R=ROUGE-L (\%), C=CIDEr; AVG5 is the geometric mean of
per-metric normalized scores. All SFF configurations outperform FL; performance
at deep cuts is lower than at shallow cuts for every method, but the margin over
FL narrows as the cut deepens. \textbf{Bold}: best; \underline{underline}:
second-best per model, depth, and metric.}
\label{tab:gpt2_shallow_deep_metrics}
\setlength{\tabcolsep}{4pt}
\begin{sc}
\resizebox{1\textwidth}{!}{
\begin{tabular}{
ll
rrrrr>{\columncolor{avggray}}r
rrrrr>{\columncolor{avggray}}r
@{\hspace{6pt}}
r
}
\toprule
\multirow{2}{*}{Model} & \multirow{2}{*}{Method}
  & \multicolumn{6}{c}{Shallow cut}
  & \multicolumn{6}{c}{Deep cut}
  & \multicolumn{1}{c}{FedB} \\
\cmidrule(lr){3-8}\cmidrule(lr){9-14}\cmidrule(lr){15-15}
 &  & B & N & M & R & C & AVG5
      & B & N & M & R & C & AVG5
      & \cellcolor{avggray}AVG5 \\
\midrule
\multirow{4}{*}{GPT2-S}
 & \textsc{Average} & \underline{61.88} & \underline{7.57} & \underline{40.10} & \textbf{64.02} & \underline{1.86} & \cellcolor{green!50}\underline{0.595} & 55.79 & 5.48 & 35.47 & \underline{62.35} & 1.36 & 0.498 & \cellcolor{red!25}0.435 \\
 & \textsc{Freeze}  & 48.00 & 5.86 & 34.61 & 58.35 & 1.29 & \cellcolor{green!50}0.476 & 28.86 & 3.05 & 29.92 & 52.56 & 0.62 & 0.310 & \cellcolor{red!25}0.220 \\
 & \textsc{Stack}   & 59.95 & 7.24 & 37.87 & 62.40 & 1.67 & 0.564 & \underline{57.79} & \underline{6.34} & \underline{36.34} & 61.55 & \underline{1.51} & \cellcolor{red!25}\underline{0.528} & \cellcolor{green!50}\textbf{0.568} \\
 & \textsc{SVD}     & \textbf{63.05} & \textbf{8.13} & \textbf{41.45} & \underline{63.81} & \textbf{1.97} & \cellcolor{green!50}\textbf{0.617} & \textbf{60.77} & \textbf{7.31} & \textbf{38.51} & \textbf{63.35} & \textbf{1.67} & \textbf{0.570} & \cellcolor{red!25}\underline{0.557} \\
\midrule
\multirow{4}{*}{GPT2-M}
 & \textsc{Average} & \underline{66.21} & \underline{8.51} & \textbf{44.92} & \textbf{67.50} & \underline{2.25} & \cellcolor{green!50}\underline{0.663} & \underline{65.17} & \underline{8.46} & 42.98 & 65.78 & 2.05 & 0.639 & \cellcolor{red!25}0.511 \\
 & \textsc{Freeze}  & 60.86 & 7.19 & 40.53 & 64.52 & 1.88 & \cellcolor{green!50}0.590 & 51.61 & 4.30 & 34.51 & 62.74 & 1.24 & 0.457 & \cellcolor{red!25}0.196 \\
 & \textsc{Stack}   & \textbf{66.67} & \textbf{8.56} & \underline{44.90} & \underline{67.33} & \textbf{2.26} & \cellcolor{green!50}\textbf{0.665} & 65.11 & 8.41 & \underline{43.72} & \textbf{65.95} & \underline{2.11} & \underline{0.645} & \cellcolor{red!25}\textbf{0.624} \\
 & \textsc{SVD}     & 66.07 & 8.49 & 44.83 & 67.29 & 2.24 & \cellcolor{green!50}0.661 & \textbf{65.37} & \textbf{8.50} & \textbf{43.85} & \underline{65.79} & \textbf{2.18} & \textbf{0.651} & \cellcolor{red!25}\underline{0.622} \\
\midrule
\multirow{4}{*}{GPT2-L}
 & \textsc{Average} & \textbf{67.95} & \textbf{8.67} & \underline{45.93} & \textbf{68.93} & \underline{2.34} & \cellcolor{green!50}\textbf{0.680} & 66.26 & 8.54 & 44.55 & 66.33 & 2.23 & 0.659 & \cellcolor{red!25}0.603 \\
 & \textsc{Freeze}  & 65.76 & 8.49 & 43.80 & 66.25 & 2.17 & \cellcolor{green!50}0.651 & 57.56 & 5.76 & 37.27 & 63.70 & 1.51 & 0.524 & \cellcolor{red!25}0.402 \\
 & \textsc{Stack}   & 67.47 & \underline{8.66} & 45.75 & 68.07 & 2.31 & \cellcolor{green!50}0.675 & \textbf{66.89} & \underline{8.58} & \underline{45.01} & \textbf{67.67} & \underline{2.26} & \underline{0.667} & \cellcolor{red!25}\underline{0.642} \\
 & \textsc{SVD}     & \underline{67.52} & 8.64 & \textbf{45.93} & \underline{68.51} & \textbf{2.35} & \cellcolor{green!50}\underline{0.678} & \underline{66.64} & \textbf{8.58} & \textbf{45.15} & \underline{67.40} & \textbf{2.28} & \textbf{0.667} & \cellcolor{red!25}\textbf{0.648} \\
\bottomrule
\end{tabular}
}
\end{sc}
\vspace{-0.1cm} % Adds a small gap 
  \caption*{ \scriptsize Amongst SFF shallow cut vs SFF deep cut vs FedB ($\equiv$ SFF with cut layer as last layer $\equiv$ all layers on client): \fcolorbox{black}{green!50}{\phantom{x}} Best \quad \fcolorbox{black}{red!25}{\phantom{x}} Worst }
\end{table*}

\begin{figure}[t]
    \centering
    \begin{subfigure}[t]{0.32\linewidth}
        \centering
        \includegraphics[width=\linewidth]{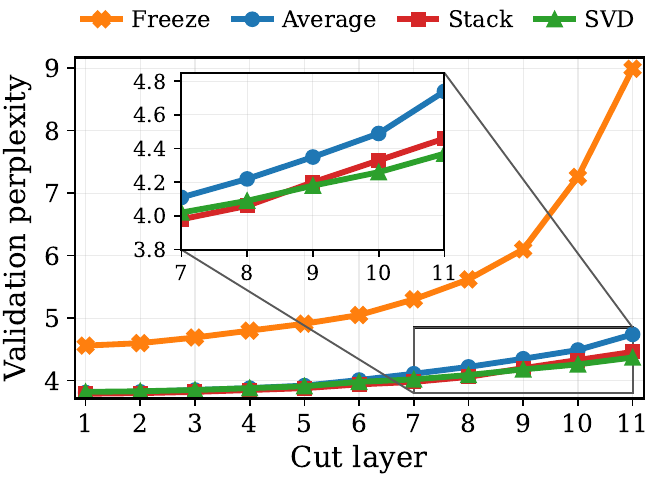}
        \caption{GPT-2 Small}
        \label{fig:ppl_vs_cutlayer_sm}
    \end{subfigure}
    \hfill
    \begin{subfigure}[t]{0.32\linewidth}
        \centering
        \includegraphics[width=\linewidth]{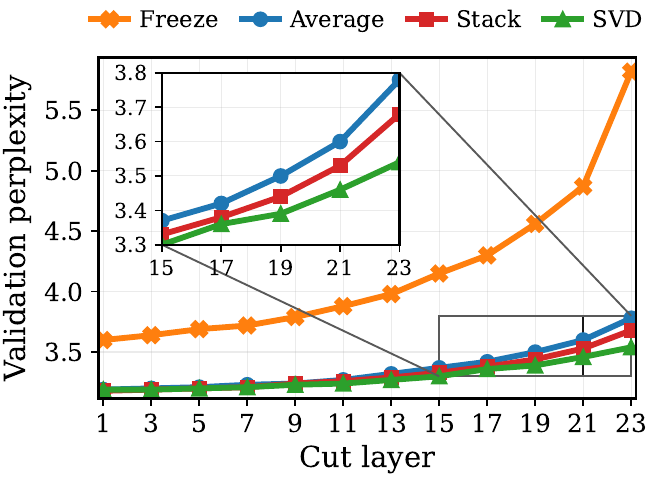}
        \caption{GPT-2 Medium}
        \label{fig:ppl_vs_cutlayer_md}
    \end{subfigure}
    \hfill
    \begin{subfigure}[t]{0.32\linewidth}
        \centering
        \includegraphics[width=\linewidth]{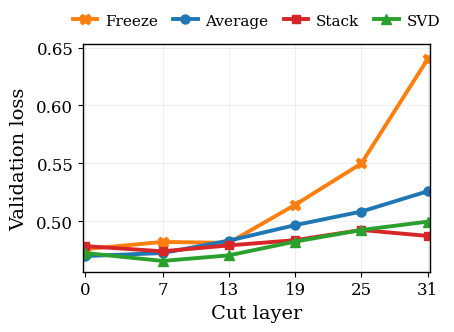}
        \caption{Llama-3-8B (GSM8K)}
        \label{fig:ppl_vs_cutlayer_llama}
    \end{subfigure}
    \caption{\textbf{Validation loss vs.\ cut-layer depth.} Loss (lower is
    better) as a function of $\ell_c$ for GPT-2 Small, Medium (heterogeneous
    LoRA ranks), and Llama-3-8B-Instruct on GSM8K (homogeneous, $r{=}16$).
    At shallow cuts, all methods perform similarly. At deep cuts, performance
    diverges sharply by aggregation method across all models and task types,
    with \textsc{Freeze} collapsing most severely and \textsc{SVD} retaining
    the strongest performance.}
    \label{fig:ppl_vs_cutlayer}
\end{figure}

%==========================================================================
\subsection{Aggregation Methods Diverge at Depth}
\label{sec:q2}
%==========================================================================

Section~\ref{sec:q1} establishes that aggregation choice becomes decisive at
deep cuts. Addressing \textbf{[RQ2]}: what distinguishes the methods? Each
makes a different structural trade-off when combining heterogeneous low-rank
adapters, producing qualitatively different noise characterized formally in Appendix~\ref{app:noise-bound}.
\textsc{Average} flattens the singular value spectrum: the low-rank task signal
contributes $O(1/K)$ to the aggregate while cross-client noise contributes
$O(1/\sqrt{K})$, a ratio that worsens with client count, yielding a high-rank
``white noise'' update that dilutes task-specific directions.
\textsc{Freeze} avoids cross-term interference by fixing a shared projection,
but constrains all clients to a single subspace, creating an irreducible
approximation bias for any task features orthogonal to it.
\textsc{Stack} preserves all client subspaces exactly, but interference noise also grows as $O(1/\sqrt{K})$.
\textsc{SVD}~\cite{bai2024federated} aggregates in update space and retains
only the dominant singular components, truncating to each client's local rank
$r_k$ to project out the high-frequency noise tail while preserving the
principal directions of cross-client consensus, explaining why it degrades
least at deep cuts.

The reason these noise differences matter more at deep cuts than shallow ones
is a structural property of the Transformer server. In CNN-based split
architectures, pooling layers progressively compress spatial resolution,
attenuating perturbations as they pass through successive layers. Transformers
have no analogous mechanism: residual connections and layer normalization are
designed to \emph{preserve} signal norms, not reduce them. This means that
whatever noise enters the server partition at the cut point exits essentially
intact, a claim we verify directly.

\begin{wraptable}{r}{0.46\textwidth}
\vspace{-6pt}
\centering
\small
\caption{\textbf{Noise propagation through the server partition.}
Final-layer perturbation ratio for the three deepest cut layers per GPT-2
model ($\pm$std, 64 inputs). Values $\geq 1.0$ confirm that noise injected
at the cut point exits the server partition unattenuated. A contractive
(CNN-like) architecture would produce values decaying toward 0.}
\label{tab:isometric_tube}
\resizebox{0.45\textwidth}{!}{
\begin{tabular}{lccc}
\toprule
\textbf{GPT-2} & $\ell_c = L{-}3$ & $\ell_c = L{-}2$ & $\ell_c = L{-}1$ \\
\midrule
S \scriptsize{$\{7,9,11\}$}   & $1.89{\pm}0.08$ & $1.55{\pm}0.05$ & $1.23{\pm}0.06$ \\
M \scriptsize{$\{14,18,23\}$} & $1.58{\pm}0.06$ & $1.37{\pm}0.02$ & $1.13{\pm}0.01$ \\
L \scriptsize{$\{21,28,35\}$} & $1.46{\pm}0.04$ & $1.22{\pm}0.02$ & $1.04{\pm}0.01$ \\
\bottomrule
\end{tabular}
}
\vspace{-4pt}
\end{wraptable}

We verify this via a perturbation propagation experiment. For each model and
cut layer $\ell_c$, we inject an L2-normalized Gaussian noise vector
$\boldsymbol{\varepsilon}$ at the cut-point hidden state and measure
$\|\tilde{x}_L - x_L\| / \|\boldsymbol{\varepsilon}\|$ at the final layer
after running the perturbed forward pass through the remaining server layers
on frozen pretrained weights (no LoRA, no training, no federation), averaged
over 64 random inputs.

Table~\ref{tab:isometric_tube} shows that across all models and cut depths,
the perturbation ratio never drops below 1.0. At $\ell_c = L{-}3$, GPT-2 Small
yields $1.89{\pm}0.08$; even at $\ell_c = L{-}1$ it remains $1.23{\pm}0.06$.
A contractive architecture would produce ratios decaying toward 0. Instead,
aggregation noise injected at the cut point exits the server partition
\emph{unattenuated or amplified}, propagating through the residual stream
without shrinking. This explains why aggregation noise that is absorbed
harmlessly at shallow cuts (where many layers follow the cut point) becomes
catastrophic at deep cuts (where the server partition is thin and provides no
compression). The transition from aggregation-invariant to aggregation-sensitive
behavior observed in Section~\ref{sec:q1} is therefore partly explained by the
server's loss of buffering capacity as depth increases.

This result also clarifies the FL baseline finding: in standard FL, aggregation
noise is injected across the \emph{entire} model with no partition to provide
even minimal buffering. Every layer processes noisy updates from the first
forward pass, and there are no subsequent layers to compensate. SFF's advantage
over FL, even at deep cuts, follows directly from the fact that at least some
unperturbed server processing occurs before the prediction head, a buffer that
vanishes only in the limit $\ell_c \to L$.

\begin{tcolorbox}[takeaway]
\textbf{Key Takeaway:} Unlike CNNs, the Transformer server partition does not
attenuate aggregation noise, it propagates it intact from cut point to output.
This near-isometric propagation explains why noise that is harmless at shallow
cuts, where the server provides a long buffer, becomes catastrophic at deep cuts
where the buffer is thin. It also explains why all SFF configurations outperform
FL: even the deepest cut retains some unperturbed server processing.
\end{tcolorbox}

%==========================================================================
\subsection{Structural Failure in Deep Server Layers}
\label{sec:q3}
%==========================================================================

Addressing \textbf{[RQ3]}: the propagation result shows that noise is not
attenuated, but does not fully account for the sharp \emph{acceleration} in
performance degradation visible in Figure~\ref{fig:ppl_vs_cutlayer} beyond a
specific depth. If propagation were uniformly near-isometric at all depths,
degradation would scale smoothly. Instead, we observe a characteristic
inflection that aligns with a discrete structural change in the server's
processing capacity.

\textbf{Attention Collapse.}
Following~\cite{sanyal2025attentioncollapsesdegeneratelayers}, we measure the
\emph{effective rank} of attention matrices at each layer by computing the
singular value spectrum and applying a 90\% spectral energy threshold. Figure~\ref{fig:rank_collapse} shows results for GPT-2 Small, Medium, and
Llama-3-8B. Effective rank is high in early layers and drops sharply at a
characteristic depth: $\ell \approx 6$ for GPT-2 Small, $\ell \approx 11$
for Medium, and $\ell \approx 19$ for LLaMA. Beyond this point, attention
matrices degenerate into near-identity, low-rank mappings, \textbf{Attention
Collapse}. These thresholds align precisely with the depths at which performance
degradation accelerates in Figure~\ref{fig:ppl_vs_cutlayer} and
Figure~\ref{fig:llama_acc}: for LLaMA, the accuracy decline across all
aggregation methods begins at $\ell_c \approx 19$, exactly matching the
measured collapse threshold. Prior to collapse, high-rank server layers can
project noisy activations toward the data manifold; after collapse, the server
tail consists of near-identity layers with no corrective capacity, and noise
propagating through the residual stream exits directly into the prediction head.

\begin{figure}[t]
    \centering
    \begin{subfigure}[t]{0.32\linewidth}
        \centering
        \includegraphics[width=0.9\linewidth]{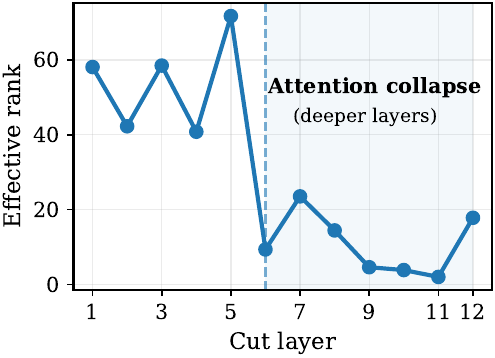}
        \caption{GPT-2 S ($\ell \approx 6$)}
        \label{fig:rank_collapse_sm}
    \end{subfigure}
    \hfill
    \begin{subfigure}[t]{0.32\linewidth}
        \centering
        \includegraphics[width=0.9\linewidth]{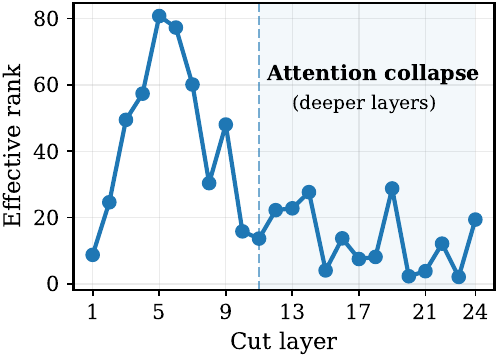}
        \caption{GPT-2 M ($\ell \approx 11$)}
        \label{fig:rank_collapse_md}
    \end{subfigure}
    \hfill
    \begin{subfigure}[t]{0.32\linewidth}
        \centering
        \includegraphics[width=0.9\linewidth]{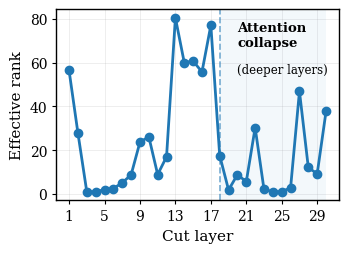}
        \caption{Llama-3-8B ($\ell \approx 19$)}
        \label{fig:rank_collapse_llama}
    \end{subfigure}
    \caption{\textbf{Attention collapse across depth.} Maximum effective rank
    (90\% spectral energy threshold) of attention matrices per layer for
    GPT-2 Small, Medium, and Llama-3-8B-Instruct. Rank drops sharply after
    $\ell \approx \{6, 11, 19\}$ respectively, coinciding with the depths at
    which performance degradation accelerates in Figures~\ref{fig:ppl_vs_cutlayer}
    and~\ref{fig:llama_acc}. The collapse threshold scales with model depth,
    consistently occurring at approximately 50--60\% of total layer depth.}
    \vspace{-2em}\label{fig:rank_collapse}
\end{figure}

\paragraph{Sequential server processing.}
Attention collapse compounds a second structural property of SFF: the Main
Server processes client activations one at a time, so update steps for client
$k$ shift the shared server state before client $j$ is processed. At shallow
cuts, high-rank server layers provide geometric freedom for diverse client
gradients to occupy non-conflicting subspaces. At deep cuts, attention collapse
reduces the server's effective dimensionality: when task diversity exceeds
geometric capacity, sequential updates inevitably collide, and each client
partially overwrites the progress of the others. The residual performance gap
that persists under \textsc{SVD}, which addresses aggregation noise upstream
but not server-side dynamics, is consistent with this compounding factor.

\begin{tcolorbox}[takeaway]
\textbf{Key Takeaway:} Deep server layers degenerate via Attention Collapse,
eliminating the nonlinear capacity needed to correct incoming noise. This
explains the characteristic performance cliff: before the collapse threshold,
the server can partially compensate for aggregation noise; after it, noise
arrives intact and exits uncorrected. The collapse threshold scales
consistently with model depth, occurring at ${\approx}50$--$60\%$ of total
layers across GPT-2 and LLaMA. Sequential server processing further narrows
the geometric space available for client updates to coexist.
\end{tcolorbox}

Our findings yield a coherent account of the Depth-Performance Dilemma.
Aggregation noise is the dominant controllable factor: methods that preserve
the low-rank update manifold (\textsc{SVD}, \textsc{Stack}) degrade
substantially less at deep cuts, and all SFF configurations outperform
standard FL by retaining at least some unperturbed server processing. Near-isometric propagation ensures that noise persists throughout the partition, while attention collapse removes the nonlinear capacity required to recover.
%Near-isometric propagation explains why noise cannot be recovered once it enters the partition, and attention collapse eliminates the corrective capacity that would otherwise compensate.
\section{Conclusion and Future Work}
\label{sec:conclusion}
 
We conducted the first systematic empirical investigation of cut-layer depth
in Split-Federated Fine-tuning of LLMs, uncovering a fundamental
\textbf{Depth-Performance Dilemma}: the partition depths that maximize
throughput and privacy are precisely where fine-tuning quality collapses,
a finding that holds across GPT-2 Small/Medium/Large and Llama-3-8B-Instruct,
on E2E NLG, GSM8K, and GLUE, under every aggregation method studied. Through
controlled experiments, we traced the collapse to three compounding structural
factors: aggregation noise whose severity is governed by how well each method
preserves the low-rank update manifold; near-isometric noise propagation
through the Transformer's residual architecture, which delivers aggregation
artifacts intact to the output; and attention collapse in deep server layers,
which eliminates corrective capacity precisely where noise arrives most intact,
further compounded by sequential server processing. These results challenge the
assumption, implicit in all prior SFF work, that partition depth is a
utility-neutral efficiency knob, depth interacts with Transformer topology
and aggregation design in ways that determine whether fine-tuning succeeds or
collapses. Two concrete directions follow from this diagnosis. First, server-side
execution strategies that accumulate gradients across all clients before
updating, approximating centralized optimization, would eliminate the
moving-target problem and recover the geometric separation that attention
collapse currently destroys. Second, the aggregation design space remains open:
even in standard FL, federated LoRA methods suffer from subspace
misalignment under non-IID data~\cite{zhang2026fedrotlora}, and in SFF this
misaligned noise enters a near-isometric partition immediately, amplifying its
consequences. Aggregation schemes that explicitly account for subspace alignment
and cut-layer topology represent a principled path toward resolving the
Depth-Performance Dilemma. %We release our full experimental pipeline to support exploration of these directions: \url{https://anonymous.4open.science/r/SFF-C1FD/README.md}.

% \nocite pulled a stray template reference ("langley00") into the
% bibliography; disabled for the preprint. Re-enable only if you actually
% want an uncited entry to appear.
% \nocite{langley00}

\bibliographystyle{unsrt} % references listed in order of appearance starting from [1]
\bibliography{references}

@article{DBLP:journals/corr/abs-1812-00564,
  author       = {Praneeth Vepakomma and
                  Otkrist Gupta and
                  Tristan Swedish and
                  Ramesh Raskar},
  title        = {Split learning for health: Distributed deep learning without sharing
                  raw patient data},
  journal      = {CoRR},
  volume       = {abs/1812.00564},
  year         = {2018},
  url          = {http://arxiv.org/abs/1812.00564},
  eprinttype    = {arXiv},
  eprint       = {1812.00564},
  bibsource    = {dblp computer science bibliography, https://dblp.org}
}

@InProceedings{pmlr-v54-mcmahan17a,
  title = 	 {{Communication-Efficient Learning of Deep Networks from Decentralized Data}},
  author = 	 {McMahan, Brendan and Moore, Eider and Ramage, Daniel and Hampson, Seth and Arcas, Blaise Aguera y},
  booktitle = 	 {Proceedings of the 20th International Conference on Artificial Intelligence and Statistics},
  pages = 	 {1273--1282},
  year = 	 {2017},
  editor = 	 {Singh, Aarti and Zhu, Jerry},
  volume = 	 {54},
  series = 	 {Proceedings of Machine Learning Research},
  month = 	 {20--22 Apr},
  publisher =    {PMLR},
  url = 	 {https://proceedings.mlr.press/v54/mcmahan17a.html}
}

@inproceedings{DBLP:journals/corr/abs-2106-09685,
  author       = {Edward J. Hu and
                  Yelong Shen and
                  Phillip Wallis and
                  Zeyuan Allen{-}Zhu and
                  Yuanzhi Li and
                  Shean Wang and
                  Lu Wang and
                  Weizhu Chen},
  title        = {{LoRA}: Low-Rank Adaptation of Large Language Models},
  booktitle    = {The Tenth International Conference on Learning Representations, {ICLR} 2022},
  year         = {2022},
  publisher    = {OpenReview.net},
  url          = {https://openreview.net/forum?id=nZeVKeeFYf9},
  eprint       = {2106.09685},
  archivePrefix = {arXiv},
  primaryClass = {cs.CL}
}

@article{Thapa_Mahawaga_Arachchige_Camtepe_Sun_2022, title={SplitFed: When Federated Learning Meets Split Learning}, volume={36}, url={https://ojs.aaai.org/index.php/AAAI/article/view/20825}, DOI={10.1609/aaai.v36i8.20825}, abstractNote={Federated learning (FL) and split learning (SL) are two popular distributed machine learning approaches. Both follow a model-to-data scenario; clients train and test machine learning models without sharing raw data. SL provides better model privacy than FL due to the machine learning model architecture split between clients and the server. Moreover, the split model makes SL a better option for resource-constrained environments. However, SL performs slower than FL due to the relay-based training across multiple clients. In this regard, this paper presents a novel approach, named splitfed learning (SFL), that amalgamates the two approaches eliminating their inherent drawbacks, along with a refined architectural configuration incorporating differential privacy and PixelDP to enhance data privacy and model robustness. Our analysis and empirical results demonstrate that (pure) SFL provides similar test accuracy and communication efficiency as SL while significantly decreasing its computation time per global epoch than in SL for multiple clients. Furthermore, as in SL, its communication efficiency over FL improves with the number of clients. Besides, the performance of SFL with privacy and robustness measures is further evaluated under extended experimental settings.}, number={8}, journal={Proceedings of the AAAI Conference on Artificial Intelligence}, author={Thapa, Chandra and Mahawaga Arachchige, Pathum Chamikara and Camtepe, Seyit and Sun, Lichao}, year={2022}, month={Jun.}, pages={8485-8493} }

@inproceedings{10.5555/3737916.3741203,
author = {Han, Pengchao and Huang, Chao and Tian, Geng and Tang, Ming and Liu, Xin},
title = {Convergence analysis of split federated learning on heterogeneous data},
year = {2024},
isbn = {9798331314385},
publisher = {Curran Associates Inc.},
address = {Red Hook, NY, USA},
booktitle = {Proceedings of the 38th International Conference on Neural Information Processing Systems},
articleno = {3287},
numpages = {69},
location = {Vancouver, BC, Canada},
series = {NIPS '24}
}

@misc{dachille2024impactcutlayerselection,
      title={The Impact of Cut Layer Selection in Split Federated Learning}, 
      author={Justin Dachille and Chao Huang and Xin Liu},
      year={2024},
      eprint={2412.15536},
      archivePrefix={arXiv},
      primaryClass={cs.DC},
      url={https://arxiv.org/abs/2412.15536}, 
}

@misc{lin2024splitlorasplitparameterefficientfinetuning,
      title={SplitLoRA: A Split Parameter-Efficient Fine-Tuning Framework for Large Language Models}, 
      author={Zheng Lin and Xuanjie Hu and Yuxin Zhang and Zhe Chen and Zihan Fang and Xianhao Chen and Ang Li and Praneeth Vepakomma and Yue Gao},
      year={2024},
      eprint={2407.00952},
      archivePrefix={arXiv},
      primaryClass={cs.LG},
      url={https://arxiv.org/abs/2407.00952}, 
}

@article{lin2025hsplitloraheterogeneoussplitparameterefficient,
  title={HSplitLoRA: A Heterogeneous Split Parameter-Efficient Fine-Tuning Framework for Large Language Models},
  author={Zheng Lin and Yu-xin Zhang and Zhe Chen and Zihan Fang and Xianhao Chen and Praneeth Vepakomma and Wei Ni and Jun Luo and Yue Gao},
  journal={ArXiv},
  year={2025},
  volume={abs/2505.02795},
  url={https://api.semanticscholar.org/CorpusID:278339039}
}

@article{10.1145/3065386,
author = {Krizhevsky, Alex and Sutskever, Ilya and Hinton, Geoffrey E.},
title = {ImageNet classification with deep convolutional neural networks},
year = {2017},
issue_date = {June 2017},
publisher = {Association for Computing Machinery},
address = {New York, NY, USA},
volume = {60},
number = {6},
issn = {0001-0782},
url = {https://doi.org/10.1145/3065386},
doi = {10.1145/3065386},
journal = {Commun. ACM},
month = may,
pages = {84–90},
numpages = {7}
}

@misc{ma2025splitfrozensplitlearningdeviceside,
      title={SplitFrozen: Split Learning with Device-side Model Frozen for Fine-Tuning LLM on Heterogeneous Resource-Constrained Devices}, 
      author={Jian Ma and Xinchen Lyu and Jun Jiang and Qimei Cui and Haipeng Yao and Xiaofeng Tao},
      year={2025},
      eprint={2503.18986},
      archivePrefix={arXiv},
      primaryClass={cs.LG},
      url={https://arxiv.org/abs/2503.18986}, 
}

@misc{wang2023privateloraefficientprivacypreserving,
      title={PrivateLoRA For Efficient Privacy Preserving LLM}, 
      author={Yiming Wang and Yu Lin and Xiaodong Zeng and Guannan Zhang},
      year={2023},
      eprint={2311.14030},
      archivePrefix={arXiv},
      primaryClass={cs.AI},
      url={https://arxiv.org/abs/2311.14030}, 
}

@inproceedings{
nair2025fslsage,
title={{FSL}-{SAGE}: Accelerating Federated Split Learning via Smashed Activation Gradient Estimation},
author={Srijith Nair and Michael Lin and Peizhong Ju and Amirreza Talebi and Elizabeth Serena Bentley and Jia Liu},
booktitle={Forty-second International Conference on Machine Learning},
year={2025},
url={https://openreview.net/forum?id=HnwcrtoDd4}
}

@inproceedings{
zhang2023towards,
title={Towards Building the Federated{GPT}: Federated Instruction Tuning},
author={Jianyi Zhang and Saeed Vahidian and Martin Kuo and Chunyuan Li and Ruiyi Zhang and Tong Yu and Guoyin Wang and Yiran Chen},
booktitle={International Workshop on Federated Learning in the Age of Foundation Models in Conjunction with NeurIPS 2023},
year={2023},
url={https://openreview.net/forum?id=TaDiklyVps}
}

@inproceedings{
sun2024improving,
title={Improving Lo{RA} in Privacy-preserving Federated Learning},
author={Youbang Sun and Zitao Li and Yaliang Li and Bolin Ding},
booktitle={The Twelfth International Conference on Learning Representations},
year={2024},
url={https://openreview.net/forum?id=NLPzL6HWNl}
}

@inproceedings{
bai2024federated,
title={Federated Fine-tuning of Large Language Models under Heterogeneous Tasks and Client Resources},
author={Jiamu Bai and Daoyuan Chen and Bingchen Qian and Liuyi Yao and Yaliang Li},
booktitle={The Thirty-eighth Annual Conference on Neural Information Processing Systems},
year={2024},
url={https://openreview.net/forum?id=gkOzoHBXUw}
}

@inproceedings{flora,
author = {He, Yexiao and Li, Ang and Lyu, Lingjuan and Shen, Zheyu and Sun, Guoheng and Wang, Hongyi and Wang, Ziyao},
year = {2024},
month = {01},
pages = {22513-22533},
title = {FLoRA: Federated Fine-Tuning Large Language Models with Heterogeneous Low-Rank Adaptations},
doi = {10.52202/079017-0708}
}

@inproceedings{cho-etal-2024-heterogeneous,
    title = "Heterogeneous {L}o{RA} for Federated Fine-tuning of On-Device Foundation Models",
    author = "Cho, Yae Jee  and
      Liu, Luyang  and
      Xu, Zheng  and
      Fahrezi, Aldi  and
      Joshi, Gauri",
    editor = "Al-Onaizan, Yaser  and
      Bansal, Mohit  and
      Chen, Yun-Nung",
    booktitle = "Proceedings of the 2024 Conference on Empirical Methods in Natural Language Processing",
    month = nov,
    year = "2024",
    address = "Miami, Florida, USA",
    publisher = "Association for Computational Linguistics",
    url = "https://aclanthology.org/2024.emnlp-main.717/",
    doi = "10.18653/v1/2024.emnlp-main.717",
    pages = "12903--12913"
}

@inproceedings{novikova-etal-2017-e2e,
    title = "The {E}2{E} Dataset: New Challenges For End-to-End Generation",
    author = "Novikova, Jekaterina  and
      Du{\v{s}}ek, Ond{\v{r}}ej  and
      Rieser, Verena",
    editor = "Jokinen, Kristiina  and
      Stede, Manfred  and
      DeVault, David  and
      Louis, Annie",
    booktitle = "Proceedings of the 18th Annual {SIG}dial Meeting on Discourse and Dialogue",
    month = aug,
    year = "2017",
    address = {Saarbr{\"u}cken, Germany},
    publisher = "Association for Computational Linguistics",
    url = "https://aclanthology.org/W17-5525/",
    doi = "10.18653/v1/W17-5525",
    pages = "201--206"
}

@article{radford2019language,
  title={Language Models are Unsupervised Multitask Learners},
  author={Radford, Alec and Wu, Jeff and Child, Rewon and Luan, David and Amodei, Dario and Sutskever, Ilya},
  year={2019}
}

@inproceedings{Chen_2024, series={CCS ’24},
   title={Unveiling the Vulnerability of Private Fine-Tuning in Split-Based Frameworks for Large Language Models: A Bidirectionally Enhanced Attack},
   url={http://dx.doi.org/10.1145/3658644.3690295},
   DOI={10.1145/3658644.3690295},
   booktitle={Proceedings of the 2024 on ACM SIGSAC Conference on Computer and Communications Security},
   publisher={ACM},
   author={Chen, Guanzhong and Qin, Zhenghan and Yang, Mingxin and Zhou, Yajie and Fan, Tao and Du, Tianyu and Xu, Zenglin},
   year={2024},
   month=dec, pages={2904–2918},
   collection={CCS ’24} }

@inproceedings{10.24963/ijcai.2025/57,
author = {Shen, Xicong and Liu, Yang and Liu, Yi and Wang, Peiran and Liu, Huiqi and Hong, Jue and Duan, Bing and Huang, Zirui and Mao, Yunlong and Wu, Ye and Zhong, Sheng},
title = {SAP: privacy-preserving fine-tuning on language models with split-and-privatize framework},
year = {2025},
isbn = {978-1-956792-06-5},
url = {https://doi.org/10.24963/ijcai.2025/57},
doi = {10.24963/ijcai.2025/57},
booktitle = {Proceedings of the Thirty-Fourth International Joint Conference on Artificial Intelligence},
articleno = {57},
numpages = {9},
location = {Montreal, Canada},
series = {IJCAI '25}
}

@article{sanyal2025attentioncollapsesdegeneratelayers,
  title={When Attention Collapses: How Degenerate Layers in LLMs Enable Smaller, Stronger Models},
  author={Sunny Sanyal and Ravid Shwartz-Ziv and Alexandros G. Dimakis and Sujay Sanghavi},
  journal={Trans. Mach. Learn. Res.},
  year={2024},
  volume={2026},
  url={https://api.semanticscholar.org/CorpusID:269137423}
}

@inproceedings{ramesh2025floristsingularvaluethresholding,
 author = {Ramesh, Hariharan and Dass, Jyotikrishna},
 booktitle = {Proceedings of Machine Learning and Systems},
 editor = {A. Chowdhery and Z. Jia},
 pages = {1--21},
 publisher = {MLSys},
 title = {FLoRIST: Singular Value Thresholding for Efficient and Accurate Federated Fine-Tuning of Large Language Models},
 url = {https://proceedings.mlsys.org/paper_files/paper/2026/file/f38cb4cf9a5eaa92b3cfa481832719c6-Paper-Conference.pdf},
 volume = {8},
 year = {2026}
}

@article{zhang2026fedrotlora,
  title   = {{FedRot-LoRA}: Mitigating Rotational Misalignment in Federated {LoRA}},
  author  = {Zhang, Haoran and Kim, Dongjun and Cha, Seohyeon and Vikalo, Haris},
  journal = {arXiv preprint arXiv:2602.23638},
  year    = {2026}
}

@article{llama3,
  title  = {The Llama 3 Herd of Models},
  author = {Meta AI},
  journal= {arXiv preprint arXiv:2407.21783},
  year   = {2024}
}

@article{gsm8k,
  title  = {Training Verifiers to Solve Math Word Problems},
  author = {Cobbe, Karl and others},
  journal= {arXiv preprint arXiv:2110.14168},
  year   = {2021}
}

@inproceedings{DBLP:journals/corr/abs-1804-07461,
    title = "{GLUE}: A Multi-Task Benchmark and Analysis Platform for Natural Language Understanding",
    author = "Wang, Alex  and
      Singh, Amanpreet  and
      Michael, Julian  and
      Hill, Felix  and
      Levy, Omer  and
      Bowman, Samuel R.",
    editor = "Linzen, Tal  and
      Chrupa{\l}a, Grzegorz  and
      Alishahi, Afra",
    booktitle = "Proceedings of the 2018 {EMNLP} Workshop {B}lackbox{NLP}: Analyzing and Interpreting Neural Networks for {NLP}",
    month = nov,
    year = "2018",
    address = "Brussels, Belgium",
    publisher = "Association for Computational Linguistics",
    url = "https://aclanthology.org/W18-5446/",
    doi = "10.18653/v1/W18-5446",
    pages = "353--355"
}
%%%%%%%%%%%%%%%%%%%%%%%%%%%%%%%%%%%%%%%%%%%%%%%%%%%%%%%%%%%%
\appendix
\newpage
\section{Extended Experimental Details}
\label{app:exp_details}

\begin{table}[h]
\vspace{-4pt}
\centering
\small
\caption{\textbf{Overview of experimental configurations.} All setups use
non-IID data partitioned via Dirichlet ($\alpha{=}0.5$) and exhaust every
cut-layer position (or a uniform stride for LLaMA).}
\label{tab:exp_overview}
\setlength{\tabcolsep}{4pt}
\begin{tabular}{llllll}
\toprule
\textbf{Model} & \textbf{Benchmark} & \textbf{Clients} & \textbf{Sampled} & \textbf{Rounds} & \textbf{LoRA rank} \\
\midrule
GPT-2 S/M/L   & E2E NLG & 3  & 3  & 100  & $\{4,8,16\}$ (heter.) \\
GPT-2 S/M/L   & E2E NLG & 3  & 3  & 100  & 16 (homo.) \\
GPT-2 Small   & GLUE (MNLI, QQP) & 100 & 10 & 200 & 16 (homo.) \\
Llama-3-8B    & GSM8K   & 30 & 3  & 50   & 16 (homo.) \\
\bottomrule
\end{tabular}
\vspace{-6pt}
\end{table}

\subsection{GPT-2 on E2E NLG}
\label{app:gpt2_e2e}
 
\paragraph{Dataset.}
The E2E NLG benchmark~\cite{novikova-etal-2017-e2e} is a conditional generation
dataset in the restaurant domain containing approximately 42K training, 4.6K
validation, and 4.6K test samples. Each sample consists of a structured meaning
representation and one or more reference utterances. We evaluate using five
automatic metrics: BLEU, NIST, METEOR, ROUGE-L, and CIDEr.
 
\paragraph{Models and training.}
We evaluate GPT-2 Small (12 layers, 768 hidden, 124M parameters), Medium
(24 layers, 1024 hidden, 355M), and Large (36 layers, 1280 hidden, 774M)~\cite{radford2019language}.
All models are fine-tuned with batch size 8, learning rate $2\times10^{-4}$,
and maximum sequence length 512, for 1 epoch (GPT-2 Small) or 2 epochs (Medium
and Large). LoRA adapters ($\Delta\mathbf{W} = \mathbf{B}\mathbf{A}$,
$\mathbf{B}\in\mathbb{R}^{d\times r}$, $\mathbf{A}\in\mathbb{R}^{r\times d}$)
are applied to the query and value projection matrices of every attention block.
 
\paragraph{Federated setup — heterogeneous (primary).}
$N{=}3$ clients, all participating every round. Client datasets are partitioned
non-IID via Dirichlet ($\alpha{=}0.5$), yielding skewed label distributions.
LoRA ranks are assigned heterogeneously: $r_i \in \{4, 8, 16\}$ uniformly at
random per client. Client adapters are aggregated every $I{=}100$ local
optimization steps. Methods requiring aligned LoRA factors apply zero-padding
rank alignment following HetLoRA~\cite{cho-etal-2024-heterogeneous}.
 
\paragraph{Cut-layer sweep.}
The cut layer $\ell_c$ is swept exhaustively over $\{1, \dots, L{-}1\}$ for
each model and aggregation method, yielding $4\times(L{-}1)$ independent SFF
runs per model (48 for Small, 92 for Medium, 140 for Large). Under rank
heterogeneity, all methods except \textsc{Stack} produce client-specific global
adapters, requiring per-client evaluation and averaging, further multiplying
experimental cost.
 
\subsection{GPT-2 Small on GLUE}
\label{app:glue}
 
\paragraph{Dataset.}
We evaluate on two GLUE tasks: MNLI (Multi-Genre Natural
Language Inference, 393K train / 9.8K val samples) and QQP (Quora Question
Pairs, 364K train / 40K val samples). Both are cast as classification tasks.
 
\paragraph{Model and training.}
GPT-2 Small (12 layers, 124M) is fine-tuned with batch size 8, learning rate
$2\times10^{-4}$, and maximum sequence length 512. LoRA adapters ($r{=}16$) are
applied to query and value projections of every attention block.
 
\paragraph{Federated setup.}
100 total clients, 10 sampled per round, for 200 communication rounds. Client
datasets are partitioned non-IID via Dirichlet ($\alpha{=}0.5$). All clients
use homogeneous rank $r{=}16$. Client adapters are aggregated every $I{=}100$
local steps. This large-client configuration tests the scalability of the
Depth-Performance Dilemma and aggregation sensitivity beyond the 3-client primary
setup.
 
\paragraph{Cut-layer sweep.}
Same exhaustive sweep as the E2E experiments: $\ell_c \in \{1,\dots,11\}$,
giving $4\times11 = 44$ independent SFF runs per GLUE task.
 
\paragraph{Results.}
The Depth-Performance Dilemma holds on both GLUE tasks: classification accuracy
degrades consistently with cut-layer depth across all four aggregation methods,
with the same aggregation ordering (\textsc{SVD} and \textsc{Stack} outperforming
\textsc{Average} and \textsc{Freeze} at deep cuts) observed in the E2E primary
experiments. The 100-client setup amplifies aggregation-method differences at
depth relative to the 3-client setup, consistent with the $O(\sqrt{K})$
cross-task interference scaling of \textsc{Stack} and the $O(1/\sqrt{K})$ signal
dilution of \textsc{Average}. Table~\ref{tab:glue}
report accuracies at shallow and deep cut for MNLI and QQP respectively.

 \begin{table}[h]
\centering
\caption{GPT-2 Small results on GLUE benchmark (QQP and MNLI) with 100 clients (10 sampled per round) under homogeneous LoRA rank ($r=16$). Shallow (resp. deep) values are obtained by averaging over the first (resp. last) 25\% of possible cut layers. Best results per task and regime in \textbf{bold}; second-best \underline{underlined}.}
\label{tab:glue}
\begin{tabular}{llcc}
\toprule
\textbf{Task} & \textbf{Method} & \textbf{Shallow (\%)} & \textbf{Deep (\%)} \\
\midrule
\multirow{4}{*}{QQP}
 & \textsc{Average} & \cellcolor{green!50}86.35 & \underline{83.60} \\
 & \textsc{Freeze}  & \cellcolor{green!50}84.66 & 79.34 \\
 & \textsc{Stack}   & \cellcolor{green!50}\underline{84.24} & 80.41 \\
 & \textsc{SVD}     & \cellcolor{green!50}\textbf{86.26} & \textbf{83.63} \\
\midrule
\multirow{4}{*}{MNLI}
 & \textsc{Average} & \cellcolor{green!50}\underline{77.44} & \underline{73.37} \\
 & \textsc{Freeze}  & \cellcolor{green!50}73.87 & 63.37 \\
 & \textsc{Stack}   & \cellcolor{green!50}73.50 & 66.51 \\
 & \textsc{SVD}     & \cellcolor{green!50}\textbf{77.55} & \textbf{73.39} \\
\bottomrule
\end{tabular}
\end{table}

\subsection{Llama-3-8B-Instruct on GSM8K}
\label{app:llama_gsm8k}
 
\paragraph{Dataset.}
GSM8K~\cite{gsm8k} is a benchmark of 8.5K grade-school mathematics word problems
(7.5K train / 1K test) requiring multi-step arithmetic reasoning. We report
exact-match accuracy on the test set.
 
\paragraph{Model and training.}
Llama-3-8B-Instruct~\cite{llama3} (32 Transformer layers, 4096 hidden dimension,
8B parameters) is fine-tuned with batch size 1, learning rate $1\times10^{-4}$,
maximum sequence length 1024, and 1 local epoch per communication round. LoRA
adapters ($r{=}16$) are applied to the key and value projection matrices of
every attention block.
 
\paragraph{Federated setup.}
30 total clients, 3 sampled per round, for 50 communication rounds. Client
datasets are partitioned non-IID via Dirichlet ($\alpha{=}0.5$). All clients
use homogeneous rank $r{=}16$.
 
\paragraph{Cut-layer sweep.}
To manage the computational cost of exhaustive sweeping on a 32-layer, 8B
model, cut layers are evaluated at stride 3: $\ell_c \in \{1, 4, 7, 10, 13,
16, 19, 22, 25, 28, 31\}$, giving $4\times11 = 44$ independent SFF runs. This
stride is fine enough to capture the characteristic depth threshold at which
performance acceleration occurs (observed around $\ell_c \approx 20$ for a
32-layer model, consistent with the attention collapse profile at roughly 60\%
depth), while remaining computationally tractable.
 
\paragraph{Results.}
The Depth-Performance Dilemma holds at 8B scale on mathematical reasoning: GSM8K
accuracy degrades consistently with cut-layer depth across all four aggregation
methods, with the same aggregation ordering observed in the GPT-2 experiments.
The attention collapse threshold, measured by effective rank of attention
matrices, shifts to $\ell \approx 20$ for LLaMA, consistent with the
${\approx}60\%$ depth pattern observed across the GPT-2 family. These results
confirm that the dilemma is not an artifact of GPT-2's architecture or the E2E
NLG task, but a general property of Transformer-based SFF. Figure~\ref{fig:llama_acc}
reports accuracy vs.\ cut-layer depth for all four aggregation methods. 

\paragraph{Hardware setup and cost.}
All experiments are run on an HPC cluster equipped with NVIDIA H200 GPUs.
Each individual SFF run for LLaMA, one cut-layer configuration under one aggregation
method for 50 communication rounds, requires approximately 9 hours of
wall-clock time. The full LLaMA sweep ($4 \times 11 = 44$ configurations)
therefore represents roughly 396 GPU-hours of computation, underscoring the
practical cost of exhaustive depth evaluation at 8B scale and motivating the
stride-3 cut-layer sampling described above.

% \subsection{Llama-3-8B-Instruct Results}
% \label{app:llama_extras}

\newpage
\section{Pseudocode}
\label{app:pseudocode}
\begin{algorithm}[h]
\caption{Split-Federated Fine-tuning (SFF); Aligned with Figure 2}
\label{alg:sff}
\KwIn{Pretrained LLM with $L$ Transformer blocks $\{T_l\}_{l=1}^L$, cut layer $l_c$, $K$ clients with private datasets $\{\mathcal{D}_k\}_{k=1}^K$, client LoRA ranks $\{r_k\}_{k=1}^K$, aggregation interval $I$ steps, learning rates $\gamma_C$ (client), $\gamma_S$ (server)}
\KwOut{Fine-tuned model}

\textbf{Partition:} Client $k$ holds blocks $T_1, \ldots, T_{l_c}$ with LoRA adapters $(B_k \in \mathbb{R}^{d \times r_k}, A_k \in \mathbb{R}^{r_k \times d})$\;
Main Server holds blocks $T_{l_c+1}, \ldots, T_L$ with shared parameters $\Theta_S$\;

\textbf{Initialize:} Global LoRA adapters $(B^{(g)}, A^{(g)})$\;
\For{each global round $t = 1, 2, \ldots$}{

    Each client $k$ downloads global adapters: $(B_k, A_k) \leftarrow (B_k^{(g)}, A_k^{(g)})$\; \tcc*[f]{Step \circled{1}}

    \For{each local step $s = 1, \ldots, I$}{

        \tcc{\textbf{Step I:} Client-side Forward Pass (parallel)}
        \For{each client $k = 1, \ldots, K$ \textbf{in parallel}}{
            Sample minibatch $(\mathbf{x}_k, \mathbf{y}_k) \sim \mathcal{D}_k$\;
            $\mathbf{h}_{l_c,k} \leftarrow T_{l_c} \circ \cdots \circ T_1(\text{Embed}(\mathbf{x}_k))$\; \tcc*[f]{Step \circled{2}, using local LoRA $(B_k, A_k)$}
            
            Send smashed activations $\mathbf{h}_{l_c,k}$ to Main Server\; \tcc*[f]{Step \circled{3}}
        }

        \tcc{\textbf{Step II:} Main Server Forward + Backward (sequential)}
        \For{each client $k = 1, \ldots, K$ \textbf{sequentially}}{
            $\hat{\mathbf{y}}_k \leftarrow T_L \circ \cdots \circ T_{l_c+1}(\mathbf{h}_{l_c,k})$\; \tcc*[f]{Step \circled{4}}

            Compute loss $\mathcal{L}_k = \mathcal{L}(\hat{\mathbf{y}}_k, \mathbf{y}_k)$\; \tcc*[f]{Step \circled{5}}
            
            Backpropagate through $\Theta_S$: compute $\nabla_{\Theta_S} \mathcal{L}_k$\; \tcc*[f]{Step \circled{6}}
            
            Update: $\Theta_S \leftarrow \Theta_S - \gamma_S \nabla_{\Theta_S} \mathcal{L}_k$\; \tcc*[f]{Step \circled{9}}
            
            Send gradient $\nabla_{\mathbf{h}_{l_c,k}} \mathcal{L}_k$ to client $k$\; \tcc*[f]{Step \circled{7}}
        }

        \tcc{\textbf{Step III:} Client-side Backward Pass (parallel)}
        \For{each client $k = 1, \ldots, K$ \textbf{in parallel}}{
            $(B_k, A_k) \leftarrow (B_k, A_k) - \gamma_C \nabla_{(B_k, A_k)} \mathcal{L}_k$\; \tcc*[f]{Step \circled{8}}
        }
    }

    \tcc{\textbf{Step IV:} Federated Aggregation on Fed Server (every $I$ steps)}
    Clients upload local LoRA $(B_k, A_k)$ to Fed Server\; \tcc*[f]{Step \circled{12}}
    
    \tcc{Any aggregation method can be used: Avg, Freeze, Stack, SVD}
    $(B_k^{(g)}, A_k^{(g)})_{k=1}^K \leftarrow$ \textsc{SpectralLoRA}$\big(\{(B_k, A_k, r_k)\}_{k=1}^K\big)$\; \tcc*[f]{Step \circled{13}, Alg.~\ref{alg:spectral_lora}}
}
\end{algorithm}

\begin{algorithm}[h!]
\caption{\textsc{SpectralLoRA}: SVD-based Federated LoRA Aggregation (FlexLoRA)}
\label{alg:spectral_lora}
\KwIn{Local LoRA adapters $\{(B_k \in \mathbb{R}^{d \times r_k}, A_k \in \mathbb{R}^{r_k \times d})\}_{k=1}^K$, client weights $\alpha_k = n_k / N$ where $n_k = |\mathcal{D}_k|$, $N = \sum_k n_k$}
\KwOut{Client-specific global adapters $\{(B_k^{(g)}, A_k^{(g)})\}_{k=1}^K$}

\tcc{\textbf{Step 1:} SVD of global weight update}
Compute local updates: $\Delta W_k = B_k A_k \in \mathbb{R}^{d \times d}$ for each client $k$\;

Aggregate: $\Delta W_{\text{agg}} = \sum_{k=1}^{K} \alpha_k \Delta W_k$\;

Perform SVD: $\Delta W_{\text{agg}} = U \Sigma V^T$, where $U \in \mathbb{R}^{d \times d}$, $\Sigma = \text{diag}(\sigma_1, \ldots, \sigma_d)$, $V \in \mathbb{R}^{d \times d}$\;

\tcc{\textbf{Step 2:} Spectral filtering: truncate to client rank}
\For{each client $k = 1, \ldots, K$}{
    $B_k^{(g)} = U_{[:, :r_k]} \, \Sigma_{[:r_k, :r_k]}$\; \tcc*[f]{Top-$r_k$ left singular vectors $\times$ singular values}
    
    $A_k^{(g)} = V^T_{[:r_k, :]}$\; \tcc*[f]{Top-$r_k$ right singular vectors}
}
\Return{$\{(B_k^{(g)}, A_k^{(g)})\}_{k=1}^K$}
\end{algorithm}

\newpage
\section{Proof of Theorem~\ref{th:throughput} (Throughput Scaling)}
\label{app:throughput}

\begin{theorem}{Throughput Scaling}{throughput}
Consider a Split-Federated system with $K$ clients sampled to participate in a
round and a main server, and let the computational cost of the Transformer
blocks be linear with depth. In the server-bound regime $d < d^{\star}$
(Eq.~\eqref{eq:dstar}), as the partition depth ratio $d \to d^{\star}$ the
system throughput $\mathcal{T}$ scales hyperbolically:
\begin{equation}
    \mathcal{T}(d) = \Theta\!\left( \frac{1}{1-d} \right).
\end{equation}
\end{theorem}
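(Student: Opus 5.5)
We will set up an explicit per-round cost model and read the throughput off it. Write $c$ for the cost of one Transformer block, $L$ for the number of blocks, and $d=\ell_c/L$ for the cut fraction. Let $\rho_c$ and $\rho_s$ be the per-unit-cost processing rates of a client and of the main server.

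Under the linearity assumption, each client executes $dL$ blocks in parallel with the others. One local step therefore costs every client time
\[
t_C(d)=\frac{c\,dL}{\rho_c}.
\]
The main server processes the $K$ client activations sequentially through the remaining $(1-d)L$ blocks, so its time per step is
\[
t_S(d)=\frac{K c(1-d)L}{\rho_s}.
\]
Forward and backward each contribute a constant factor, which we absorb into $c$. We will argue from Algorithm~\ref{alg:sff} that the server stage starts only after the client forward, and that clients wait on gradients. With pipelining across steps, the steady-state step time is governed by the slower stage, $t(d)=\max\{t_C(d),t_S(d)\}$, possibly plus lower-order additive terms such as communication of the fixed-size activation $\mathbf h_{\ell_c}$. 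That communication term is independent of $d$ because Transformer width is constant. Throughput is samples per unit time, $\mathcal T(d)=K/t(d)$.

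Next we locate the crossover. Setting $t_C=t_S$ gives $d/\rho_c=K(1-d)/\rho_s$, so
\[
d^\star=\frac{K}{K+\rho_s/\rho_c},
\]
matching Eq.~\eqref{eq:dstar}. For $d<d^\star$ we have $t_S>t_C$, so the system is server-bound. In this regime
\[
\mathcal T(d)=\frac{\rho_s}{cL(1-d)},
\]
which is exactly $C/(1-d)$ with the constant $C=\rho_s/(cL)$ independent of $d$. That gives the $\Theta(1/(1-d))$ claim on the whole regime, and in particular as $d\to d^\star$.

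If the model is instead the non-overlapped sum $t_C+t_S$, we bound it above and below. Since $\max\le t_C+t_S\le 2\max$, we get $\mathcal T\in[\tfrac12,1]\cdot K/\max$. The $\Theta$ statement therefore survives whichever pipelining assumption is adopted.

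The main obstacle is making the cost model faithful enough that the $\Theta$ constants stay uniform in $d$. Two pieces need care. The first is the communication term and the constant-size overheads, namely the embedding layer and the LM head, which sit on the client and server respectively. The second is the gap between the synchronous schedule of Algorithm~\ref{alg:sff} and the max-based model.

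We will handle both by writing $t(d)=\max\{t_C,t_S\}+O(1)$, where the $O(1)$ is independent of $d$. In the server-bound regime the server term satisfies
\[
t_S\ge \frac{Kc(1-d^\star)L}{\rho_s}>0,
\]
since $d<d^\star<1$. This lower bound is a fixed positive constant, so the additive overhead only changes the $\Theta$ constants.

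Finally, we note that the ``$5\times$'' figure quoted in the main text is the ratio $(1-0.5)/(1-0.9)=5$. It is an upper bound that is attained only when $d^\star\ge0.9$, consistent with Table~\ref{tab:dstar}.
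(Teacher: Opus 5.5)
Your proposal is correct and follows essentially the same route as the paper. Both use the same parallel-client/sequential-server cost model with throughput $K/\max(T_{\text{client}},T_{\text{server}})$, the same crossover $d^{\star}=K/(K+\rho_s/\rho_c)$, and the same exact identity $\mathcal{T}(d)=\frac{\rho_s}{W}\cdot\frac{1}{1-d}$ in the server-bound regime (your $W=cL$); your added remarks on the sum-versus-max schedule and on $d$-independent overheads (controlled via $t_S\ge Kc(1-d^{\star})L/\rho_s>0$) go beyond the paper's proof but are sound and only change the constants.
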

\textbf{Example:} Moving the client partition depth ratio from $d=0.5$ to $d=0.9$ yields a theoretical $5\times$ throughput increase, bounded only by the server's sequential processing capacity. (Proof in Appendix~\ref{app:throughput}).

\begin{proof}
Let $W$ denote the total computational work (measured in FLOPs) required for a forward and backward pass on a single data batch for the full $L$-layer model. 
Since a Transformer model is composed of identical blocks (uniform hidden dimension $d_{model}$ and sequence length $S$), the computational cost is linearly proportional to the model depth. For a partition depth ratio $d \in [0, 1)$, we define the work split as:
\begin{align}
    W_{\text{client}} &= d \cdot W \\
    W_{\text{server}} &= (1 - d) \cdot W
\end{align}

The system operates in synchronized rounds. The total time to complete one training round, $T_{\text{round}}$, is determined by the bottleneck between the parallel client execution phase and the sequential server execution phase. Let $\rho_c$ and $\rho_s$ be the computational rates (FLOPs/sec) of a single client device and the central server, respectively.

1. \textbf{Client Phase (Parallel):} All $K$ clients compute their lower-model segments simultaneously.
\begin{equation}
    T_{\text{client}} = \frac{W_{\text{client}}}{\rho_c} = \frac{d \cdot W}{\rho_c}
\end{equation}

2. \textbf{Server Phase (Sequential):} The server must process the activations and gradients for all $K$ clients.
\begin{equation}
    T_{\text{server}} = \frac{K \cdot W_{\text{server}}}{\rho_s} = \frac{K \cdot (1 - d) \cdot W}{\rho_s}
\end{equation}

The system throughput $\mathcal{T}(d)$, defined as the number of client updates processed per second, is given by:
\begin{equation}
    \mathcal{T}(d) = \frac{K}{T_{\text{round}}} = \frac{K}{\max(T_{\text{client}}, T_{\text{server}})}
\end{equation}

% For massive-scale SFF where $K$ is large, the server becomes the bottleneck ($T_{\text{server}} > T_{\text{client}}$). Specifically, this holds when:
% \begin{equation}
%     \frac{K (1-d)}{\rho_s} > \frac{d}{\rho_c} \implies K > \frac{\rho_s}{\rho_c} \frac{d}{1-d}
% \end{equation}
% Under this saturation regime, the throughput is governed by the server's sequential processing time:
% \begin{equation}
%     \mathcal{T}(d) \approx \frac{K}{T_{\text{server}}} = \frac{K}{\frac{K \cdot (1 - d) \cdot W}{\rho_s}} = \frac{\rho_s}{W} \cdot \frac{1}{1 - d}
% \end{equation}

% Since the term $\frac{\rho_s}{W}$ is constant with respect to the partition depth $d$, we conclude that the throughput scales as:
% \begin{equation}
%     \mathcal{T}(d) = \Theta\left( \frac{1}{1 - d} \right)
% \end{equation}
% Thus, as $d \to 1$, the denominator approaches zero, and the throughput diverges hyperbolically.
% \end{proof}

The two phases scale oppositely in $d$: $T_{\text{client}} = dW/\rho_c$ grows
with depth while $T_{\text{server}} = K(1-d)W/\rho_s$ shrinks. The server is
therefore the bottleneck ($T_{\text{server}} > T_{\text{client}}$) precisely when
\begin{equation}
    \frac{K(1-d)}{\rho_s} > \frac{d}{\rho_c}
    \;\;\Longleftrightarrow\;\;
    d < d^{\star} := \frac{K}{K + \rho_s/\rho_c},
    \label{eq:dstar}
\end{equation}
where $d^{\star}$ is the \emph{throughput-optimal cut depth}, the crossover
below which server-side compute dominates the round time. Within this
server-bound regime, throughput is governed by the sequential server phase:
\begin{equation}
    \mathcal{T}(d) \approx \frac{K}{T_{\text{server}}}
    = \frac{K}{\dfrac{K(1-d)W}{\rho_s}}
    = \frac{\rho_s}{W}\cdot\frac{1}{1-d}.
\end{equation}
Since $\rho_s/W$ is constant in $d$, we conclude $\mathcal{T}(d) = \Theta(1/(1-d))$,
diverging hyperbolically as $d \to d^{\star}$. For $d > d^{\star}$ the client
phase dominates and throughput instead \emph{decreases} in $d$; the throughput
incentive for deep cuts is thus confined to the server-bound regime, quantified
next.
\end{proof}

\textbf{Operating regime and the achievable cut range.}
The scaling $\Theta(1/(1-d))$ governs the server-bound regime $d < d^{\star}$,
so whether throughput rises across the \emph{entire} achievable cut range
depends on where $d^{\star}$ sits relative to the deepest valid cut. A valid
split retains at least one server layer, so the maximum cut depth is
$(L{-}1)/L$: $11/12{=}0.91$, $15/16{=}0.94$, and $35/36{=}0.97$ for GPT-2
Small, Medium, and Large. Table~\ref{tab:dstar} reports
$d^{\star} = K/(K+\rho_s/\rho_c)$ across participation levels $K$ and compute
ratios $\rho_s/\rho_c$. When $d^{\star}$ meets or exceeds $(L{-}1)/L$, the
server is the bottleneck at every cut and throughput increases monotonically
with depth, as plotted in Figure~\ref{fig:depth_dilemma}; for smaller
$d^{\star}$, throughput peaks at $d^{\star}$ and the deep-cut incentive instead
rests on privacy (below).

\textbf{The dilemma holds across the operating range.}
For the Depth-Performance Dilemma to exist, $d^{\star}$ must land within the
attention-collapse regime, which begins at $50$--$60\%$ depth for all models
(Figure~\ref{fig:rank_collapse}). This holds across realistic settings: at
$\rho_s/\rho_c{=}10$, $K{\geq}10$ already gives $d^{\star}{\geq}0.5$ (our
$100$-client GLUE setup samples $K{=}10$/round); at $\rho_s/\rho_c{=}2$, even
$K{=}3$ gives $d^{\star}{=}0.60$; and at $\rho_s/\rho_c{=}1$ (server and clients
equally capable), our E2E/GSM8K setup ($K{=}3$, $d^{\star}{=}0.75$) and GLUE
setup ($K{=}10$, $d^{\star}{=}0.91$) both fall inside the collapse regime.
Crucially, the dilemma does not rest on throughput alone: even where throughput
favors shallow cuts (e.g., $d^{\star}{=}0.23$ at $K{=}3$, $\rho_s/\rho_c{=}10$)
or the server is no longer the bottleneck ($d > d^{\star}$), privacy leakage
still decays consistently with depth (Figure~\ref{fig:ep_vs_depth}), independent of
$K$, $\rho$, and execution model, so a privacy-driven deployment cuts deep at
any $K$.
\begin{table}[t!]
\centering
\small
\caption{\textbf{Throughput-optimal cut depth $d^{\star}=K/(K+\rho_s/\rho_c)$}
as a function of the participating clients per round $K$ and the
server-to-client compute ratio $\rho_s/\rho_c$ ($\rho_s/\rho_c{=}1$: equally
capable server and clients). The server stays the bottleneck for all $d<d^{\star}$;
the Depth-Performance Dilemma holds whenever $d^{\star}$ reaches the
attention-collapse onset ($\approx 50$--$60\%$ depth).}
\label{tab:dstar}
\setlength{\tabcolsep}{10pt}
\begin{tabular}{cccc}
\toprule
$K$ & $d^{\star}\,(\rho_s/\rho_c{=}10)$ & $d^{\star}\,(\rho_s/\rho_c{=}2)$ & $d^{\star}\,(\rho_s/\rho_c{=}1)$ \\
\midrule
3   & 0.23 & 0.60 & 0.75 \\
10  & 0.50 & 0.83 & 0.91 \\
30  & 0.75 & 0.94 & 0.97 \\
100 & 0.91 & 0.98 & 0.99 \\
\bottomrule
\end{tabular}
\end{table}

\newpage
\section{Aggregation Noise Bounds}
\label{app:noise-bound}
\begin{assumption}{Heterogeneous Complexity}{hetero-assume}
Assume,  disjoint clients $k \in \{1 \dots K\}$ possess optimal updates $\Delta W_k^*$ associated with distinct local tasks. We define the Global Noise $\mathcal{E}$ as the deviation of the aggregated update $\Delta W_{agg}$ from the ideal multi-task manifold.
\end{assumption}

%We present the \textbf{Aggregation Noise Bounds}, proving that all existing methods introduce systemic error under non-IID conditions (refer Fed Server in Figure~\ref{fig:SFF}).
\begin{theorem}{Aggregation Noise}{agg_noise}
%\label{thm:noise_trilemma}
Let $\Delta W_k = B_k A_k$ be the update from client $k$. The aggregation error $\mathcal{E}$ for each paradigm is lower-bounded by specific noise artifacts:

\begin{enumerate}[leftmargin=*]
    \item \textbf{\textsc{Average} ($\mathcal{E}_{avg}$): The Variance Noise.}\\
    Averaging forces a common rank $r_{max}$. Due to the nonlinearity of matrix multiplication ($\mathbb{E}[B]\mathbb{E}[A] \neq \mathbb{E}[BA]$) and the incoherence of client updates, the aggregate spectrum collapses:
    \begin{equation}
        \mathcal{E}_{avg} \approx \underbrace{\text{Cov}(B, A)}_{\text{Algebraic Error}} + \underbrace{\frac{1}{\sqrt{K}}\|\Sigma_{noise}\|_F}_{\text{Spectrum Flattening}}
    \end{equation}
    The resulting update resembles high-rank ``white noise'', diluting specific task signals.

    % \item \textbf{Stacking / FLoRA ($\mathcal{E}_{stack}$): The Interference Noise.}
    % Summing updates $\Delta W_{stack} = \sum_{k} B_k A_k$ introduces \textbf{Destructive Interference}. For a specific input $x$ belonging to task $k$, the updates from other clients $j \neq k$ act as adversarial perturbations:
    % \begin{equation}
    %     \mathcal{E}_{stack} \propto \left\| \sum_{j \neq k} x B_j A_j \right\|_F \quad (\text{Cross-Task Interference})
    % \end{equation}
    % In deep split networks, where $K$ is large, this ``crosstalk'' variance ($\sigma^2_{interference}$) overwhelms the local signal magnitude, preventing stable convergence.

    \item \textbf{\textsc{Stack} ($\mathcal{E}_{stack}$): The Interference Noise.}
    Aggregating via summation ($\Delta W_{stack} = \sum B_k A_k$) creates a ``Cross-Task Interference'' problem. For an input $x$ belonging to Client $k$, the adapters of all other clients $j \neq k$ act as adversarial perturbations:
    % \begin{equation}
    %     y = \hspace{-1.5em}\underbrace{x W_{base}}_{\text{Pre-trained Knowledge}}\hspace{-1.4em}+\underbrace{x B_k A_k}_{\text{Signal}} + \hspace{-1.6em}\underbrace{\sum_{j \neq k} x B_j A_j}_{\text{Cross-Talk (interference)}}
    % \end{equation}
    \begin{equation}
        \mathcal{E}_{stack} = \| \sum_{j \neq k} x B_j A_j \|_F \propto \sqrt{K-1} \cdot \sigma_{cross} %\quad (\text{Cross-Task Interference})
    \end{equation}
    In deep split networks ($K \gg 1$), this crosstalk variance overwhelms the local signal magnitude.
    %The noise term $\mathcal{E}_{stack} = \| \sum_{j \neq k} x B_j A_j \|$ scales with $\sqrt{K}$ (assuming uncorrelated tasks), overwhelming the signal as the number of clients increases.

    \item \textbf{\textsc{Freeze} ($\mathcal{E}_{freeze}$): The Bias Noise.}\\
    % Fixing a shared projection $A_{fix}$ forces all clients to optimize within a restricted subspace. The error is the unlearnable residual energy:
    % \begin{equation}
    %     \mathcal{E}_{freeze} \geq \hspace{-0.1em} \sum_{k=1}^K \| \Delta W_k^* - \text{proj}_{A_{fix}}\hspace{-0.1em}(\Delta W_k^*) \|_F^2
    % \end{equation}
    % This constitutes an irreducible \textbf{Approximation Bias} for any client whose optimal rank subspace is orthogonal to $A_{fix}$.
    Fixing a shared projection $A_{fixed}$ constraints the global update $\Delta W_{freeze} = \bar{B} A_{fixed}$ to a fixed subspace $\mathcal{S}_A$. The error is the unlearnable residual energy of the ideal global update:
    \begin{equation}
        \mathcal{E}_{freeze} = \| \Delta W^*_{ideal} - \text{proj}_{\mathcal{S}_A}(\Delta W^*_{ideal}) \|_F
    \end{equation}
    This constitutes an irreducible \textbf{Approximation Bias} for any task features lying in the orthogonal complement $\mathcal{S}_A^\perp$.

\end{enumerate}
\end{theorem}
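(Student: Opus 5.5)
We handle the three parts of Theorem~\ref{th:agg_noise} separately, under a common stochastic model that makes Assumption~\ref{ass:hetero-assume} quantitative. Each client update is written as $\Delta W_k = B_k A_k = \Delta W^{\ast}_k$, where $B_k = \bar B + \delta B_k$ and $A_k = \bar A + \delta A_k$. The fluctuations $(\delta B_k,\delta A_k)$ are zero-mean and independent across clients. Because the local tasks are distinct, we also treat the task-specific components $\Delta W^{\ast}_k$ as mutually incoherent: their column and row spaces are in generic position, and their cross inner products $\langle \Delta W_j^{\ast}, \Delta W_k^{\ast}\rangle_F$ have zero mean and variance $\sigma_{cross}^2$. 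Uniform weights $\alpha_k = 1/K$ are assumed throughout. The bounds are read as leading-order, in-expectation statements, matching the $\approx$ and $\propto$ in the theorem, rather than worst-case inequalities.

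\emph{\textsc{Average}.} We expand the product of averages,
\[
\bar B_{agg}\bar A_{agg}=\Big(\tfrac1K\sum_k B_k\Big)\Big(\tfrac1K\sum_k A_k\Big),
\]
and subtract the ideal $\tfrac1K\sum_k B_kA_k$. A direct computation gives
\[
\tfrac1K\sum_k B_kA_k-\bar B_{agg}\bar A_{agg}=\tfrac1K\sum_k(B_k-\bar B_{agg})(A_k-\bar A_{agg}),
\]
which is the empirical cross-covariance $\mathrm{Cov}(B,A)$, the algebraic error. For the spectrum-flattening term, we split each update into a shared low-rank signal plus a client-specific residual. The signal component of the aggregate is the average of incoherent rank-$r$ pieces, so each task direction retains only $O(1/K)$ weight. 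The residuals average by a CLT-type argument to a matrix of Frobenius norm $\tfrac{1}{\sqrt K}\|\Sigma_{noise}\|_F$. Since the residuals are independent and in generic position, this sum has a spread-out singular spectrum, for example via a Marchenko--Pastur-style heuristic. Combining the two pieces with the triangle inequality gives the stated form.

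\emph{\textsc{Stack}.} Stacking reproduces $\sum_j B_jA_j$ exactly. For an input $x$ from client $k$, the output deviation from $x(W_{base}+B_kA_k)$ is $\sum_{j\ne k} xB_jA_j$. We compute its second moment:
\[
\mathbb E\Big\|\sum_{j\ne k} x B_jA_j\Big\|_F^2=\sum_{j\ne k}\mathbb E\|xB_jA_j\|_F^2+\sum_{j\ne l}\mathbb E\langle xB_jA_j,\,xB_lA_l\rangle .
\]
Incoherence makes the cross terms vanish in expectation. Each diagonal term equals $\sigma_{cross}^2$, so the second moment is $(K-1)\sigma_{cross}^2$. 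Taking the square root, and applying Jensen's inequality or a concentration argument for the typical value, gives the $\sqrt{K-1}\,\sigma_{cross}$ scaling.

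\emph{\textsc{Freeze}.} Every client's $B_k$ multiplies the same $A_{fixed}$, so any aggregate $\bar B A_{fixed}$ has row space contained in $\mathcal S_A=\mathrm{row}(A_{fixed})$. The best achievable approximation of $\Delta W^{\ast}_{ideal}$ within this class is the orthogonal projection $\Delta W^{\ast}_{ideal}P_{\mathcal S_A}$, by the Pythagorean theorem in the Frobenius inner product. Hence the error satisfies
\[
\mathcal E_{freeze}\ge\|\Delta W^{\ast}_{ideal}(I-P_{\mathcal S_A})\|_F ,
\]
with equality at the optimal $\bar B$, and this bound does not depend on how $\bar B$ is trained. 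This is the one part that is rigorous as stated.

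\emph{Main obstacle.} The hardest step is the spectrum-flattening claim for \textsc{Average}. It requires a precise noise model for the residuals, and the cross-covariance term and the noise term are not orthogonal, so they are only additive up to constants. Our first approach would be to bound $\mathbb E\|\cdot\|_F^2$ for the sum of the two terms and drop the cross term using the zero-mean assumption. We would then present the final statement at the order-of-magnitude level, consistent with the theorem's use of $\approx$.
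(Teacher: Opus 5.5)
Your overall strategy works at the heuristic level the statement allows, and it departs from the paper's proof mainly in Part~1. The paper expands $\bar B\bar A=\frac{1}{K^2}\sum_i B_iA_i+\frac{1}{K^2}\sum_{i\neq j}B_iA_j=\frac1K\Delta W^*_{ideal}+N$. It then treats the $K(K-1)$ cross terms as independent isotropic matrices and invokes Marchenko--Pastur to get $\|N\|_2\approx\sigma/\sqrt K$. Your identity $\frac1K\sum_k B_kA_k-\bar B_{agg}\bar A_{agg}=\frac1K\sum_k(B_k-\bar B_{agg})(A_k-\bar A_{agg})$ is exact, and it buys three things:
\begin{enumerate}
\item It shows the error against $\Delta W^*_{ideal}$ \emph{is} the empirical cross-covariance, which is what the ``$\mathrm{Cov}(B,A)$'' label means.
\item It keeps the consensus component out of the error. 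Under your model the paper's $N$ has mean $\frac{K-1}{K}\mathbb E[B]\,\mathbb E[A]$, and its cross terms share factors, so they are neither pure noise nor independent.
\item It gets the rate right. The paper's own count $\frac{1}{K^2}\sqrt{K(K-1)}\,\sigma$ is of order $\sigma/K$, whereas a CLT over $K$ independent clients genuinely yields $K^{-1/2}$.
\end{enumerate}
In Part~2 you and the paper run the same second-moment computation. The paper's proof, however, inserts a $1/K$ normalization ($y=xW_{base}+\frac1K\sum_j xB_jA_j$) and reports $\sigma/\sqrt K$. You follow the statement's unnormalized sum and obtain the displayed $\sqrt{K-1}\,\sigma_{cross}$; the signal-to-noise ratio is $\Theta(K^{-1/2})$ either way. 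Part~3 is the same row-space projection argument. Your inequality for every $\bar B$, with equality at the Frobenius-optimal $\bar B$, is the correct form of the paper's displayed equality.

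Two points need repair.

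First, in Part~1 you cannot obtain the spectrum-flattening term by adding a second piece through the triangle inequality. By your own identity, $\mathcal E_{avg}=\|\widehat{\mathrm{Cov}}(B,A)\|_F$ exactly. The averaged client residuals you treat by the CLT (e.g.\ $\bar B\,\overline{\delta A}+\overline{\delta B}\,\bar A$) appear identically in $\bar B_{agg}\bar A_{agg}$ and in $\Delta W^*_{ideal}$ and cancel. So the quantity you add is not part of $\mathcal E_{avg}$ at all, and the triangle inequality would only give an upper bound anyway. Extract both terms from the covariance itself. Write $\widehat{\mathrm{Cov}}(B,A)=\frac1K\sum_k\delta B_k\delta A_k-\overline{\delta B}\,\overline{\delta A}$. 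Its mean is $\frac{K-1}{K}\mathbb E[\delta B\,\delta A]$, the population covariance. Its zero-mean fluctuation is a sum of $K$ independent low-rank pieces with squared Frobenius norm $\frac1K\|\Sigma_{noise}\|_F^2$ in expectation, where $\|\Sigma_{noise}\|_F^2:=\mathbb E\|\delta B\,\delta A-\mathbb E[\delta B\,\delta A]\|_F^2$. The bias--variance split then gives $\mathbb E\,\mathcal E_{avg}^2\approx\|\mathrm{Cov}(B,A)\|_F^2+\frac1K\|\Sigma_{noise}\|_F^2$. This removes the non-orthogonality obstacle you flagged. Via $\sqrt{a^2+b^2}\ge(a+b)/\sqrt2$, it even yields the stated additive form as a lower bound on the root-mean-square error, up to a constant.

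Second, in Part~2 the cross terms $\mathbb E\langle xB_jA_j,xB_lA_l\rangle$ vanish only if the interference vectors have zero mean. Under your shared-mean model they equal $\langle\mathbb E[xB_jA_j],\mathbb E[xB_lA_l]\rangle\neq0$, and the sum grows like $K-1$ rather than $\sqrt{K-1}$. So incoherence must be imposed on the client-specific parts. Also, $\sigma_{cross}^2$ should be defined as $\mathbb E\|xB_jA_j\|_F^2$, not as the variance of $\langle\Delta W^*_j,\Delta W^*_k\rangle_F$. Finally, Jensen points the wrong way for a lower bound, so rely on the concentration argument you mention.
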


\subsection{Proof of Theorem~\ref{th:agg_noise} (The Aggregation Noise)}
\label{app:proof_trilemma}

In this section, we rigorously derive the error bounds for the three dominant LoRA aggregation paradigms: \textsc{Average} (FedIT~\cite{bai2024federated}), \textsc{Stack} (FLoRA~\cite{flora}), and \textsc{Freeze} (FFA-LoRA~\cite{sun2024improving}).

% \subsection{Setup and Definitions}
% Let there be $K$ clients. Each client $k$ computes an optimal local update $\Delta W_k = B_k A_k$, where $B_k \in \mathbb{R}^{d \times r}$ and $A_k \in \mathbb{R}^{r \times d}$.
% We assume the global goal is to learn a representation that minimizes the loss across all client distributions. 

% Let $x$ be an input token belonging to the distribution of Client $k$ (denoted $x^{(k)}$). The \textbf{Ideal Output} for this token is:
% \begin{equation}
%     y_{ideal} = x^{(k)} (W_{base} + B_k A_k)
% \end{equation}
% The \textbf{Aggregation Error} $\mathcal{E}$ is defined as the expected deviation of the aggregated model's output from this ideal output:
% \begin{equation}
%     \mathcal{E} = \mathbb{E}_{k} \left[ \| y_{agg} - y_{ideal} \|_F \right]
% \end{equation}

% ---

\subsubsection{Setup and Definitions}
Let there be $K$ clients. Each client $k$ possesses an optimal update $\Delta W_k^*$ required to solve their local task.
The \textbf{Ideal Global Update} is the average of these optimal directions (assuming a uniform objective):
\begin{equation}
    \Delta W^*_{ideal} = \frac{1}{K} \sum_{k=1}^K \Delta W_k^*
\end{equation}
The \textbf{Aggregation Error} $\mathcal{E}$ is the Frobenius norm of the difference between the actual aggregated update and this ideal update.

\subsubsection{Part 1: \textsc{Average} (FedIT~\cite{bai2024federated}) (The Variance Noise)}
\textbf{Mechanism:} The server computes $\bar{B} = \frac{1}{K}\sum B_k$ and $\bar{A} = \frac{1}{K}\sum A_k$. The aggregated update is $\Delta W_{avg} = \bar{B}\bar{A}$.

\textbf{Proof:}
Expanding the product of sums:
\begin{align}
    \Delta W_{avg} &= \left( \frac{1}{K} \sum_{i} B_i \right) \left( \frac{1}{K} \sum_{j} A_j \right) \\
    &= \frac{1}{K^2} \sum_{i} B_i A_i + \frac{1}{K^2} \sum_{i \neq j} B_i A_j \\
    &= \frac{1}{K} \Delta W^*_{ideal} + \underbrace{\frac{1}{K^2} \sum_{i \neq j} B_i A_j}_{\text{Algebraic Noise}}
\end{align}
where the first equality uses $\frac{1}{K^2}\sum_i B_i A_i = \frac{1}{K}\cdot\frac{1}{K}\sum_i \Delta W_i^* = \frac{1}{K}\Delta W^*_{ideal}$.
The signal term therefore scales as $O(1/K)$ relative to each client's update magnitude.
%The error is dominated by the cross-terms. 
% Under the assumption that clients are non-IID (uncorrelated subspaces), the cross-terms $B_i A_j$ are incoherent. The singular value spectrum of this sum tends toward a uniform distribution ("white noise"), diluting the task-specific spectral signatures.
% Under the assumption that client adapters are statistically independent and isotropic (randomly oriented in high dimensions due to non-IID tasks), the cross-terms $B_i A_j$ behave like random noise. By the law of large numbers and random matrix theory, the singular value spectrum of such a sum of random matrices tends toward a Marchenko-Pastur distribution (or ``white noise'' spectrum), where:
% \begin{equation}
%     \sigma_{noise} \propto \frac{1}{\sqrt{K}}
% \end{equation}
% Thus, averaging effectively ``flattens'' the sharp, task-specific features of $B_k A_k$ into a high-rank, low-magnitude noise floor.

\textbf{Spectral Analysis (Justification of ``Flattening''):}
We analyze the spectral norm of the algebraic noise term $N = \frac{1}{K^2} \sum_{i \neq j} B_i A_j$.
Assuming the cross-terms $B_i A_j$ are independent isotropic variables with variance $\sigma^2$, the sum consists of $K(K-1)$ terms. By the universality of random matrices (Marchenko-Pastur law), the singular values of $N$ converge to a bulk distribution bounded by:
\begin{equation}
    \| N \|_2 \approx \frac{1}{K^2} \cdot \sqrt{K(K-1)} \cdot \sigma \approx \frac{\sigma}{\sqrt{K}}
\end{equation}
Crucially, this noise energy is not concentrated in a specific direction (like the signal) but is spread across all dimensions $d$.
This transforms the spectral profile of the weight update:
\begin{itemize}
    \item \textbf{Before Averaging:} $\Delta W_k$ is low-rank (spiky spectrum).
    \item \textbf{After Averaging:} $\Delta W_{avg}$ is high-rank (flat spectrum).
\end{itemize}
We term this phenomenon \textbf{Spectrum Flattening}, as the noise floor rises to $\propto 1/\sqrt{K}$, obscuring the singular values of the true signal. Thus, averaging effectively ``flattens'' the sharp, task-specific features of $B_k A_k$ into a high-rank, low-magnitude noise floor.

\subsubsection{Part 2: \textsc{Stack} (FLoRA~\cite{flora}) (The Interference Noise)}
\textbf{Mechanism:} The server maintains the sum of all adapters. For an input $x$, the forward pass is $x W_{base} + \sum_{j=1}^K x B_j A_j$.

% \textbf{Proof:}
% Let the input $x^{(k)}$ belong to Client $k$. The ideal update is $x^{(k)} B_k A_k$. The actual output is:
% \begin{equation}
%     y_{stack} = x^{(k)} W_{base} + x^{(k)} B_k A_k + \sum_{j \neq k} x^{(k)} B_j A_j
% \end{equation}
% The error term is the \textbf{Interference Sum}:
% \begin{equation}
%     \mathcal{E}_{stack} = \left\| \sum_{j \neq k} x^{(k)} B_j A_j \right\|_F
% \end{equation}
% We assume that the adapters of other clients ($j \neq k$) are uncorrelated with the current task $k$. Let $z_j = x^{(k)} B_j A_j$ be the interference vector from client $j$. We assume $\mathbb{E}[z_j] = 0$ (zero-mean noise relative to task $k$) and variance $\text{Var}(z_j) = \sigma^2$.
% The squared norm of the sum of uncorrelated vectors is the sum of their squared norms:
% \begin{equation}
%     \mathbb{E} \left[ \left\| \sum_{j \neq k} z_j \right\|^2 \right] = \sum_{j \neq k} \mathbb{E} [\| z_j \|^2] \approx (K-1) \sigma^2
% \end{equation}
% Thus, the Root Mean Square (RMS) error scales as:
% \begin{equation}
%     \mathcal{E}_{stack} \propto \sqrt{K} \cdot \sigma
% \end{equation}
% As the number of clients $K$ increases, the magnitude of the interference (``Cross-Talk'') grows, eventually overwhelming the signal from the single correct adapter ($B_k A_k$).

% ---
\textbf{Mechanism:} The server sums the updates $\Delta W_{stack} = \sum_{k} B_k A_k$.
During inference, for an input $x$ belonging to Client $k$, the model computes:
\begin{equation}
    y = x W_{base} + \frac{1}{K}\sum_{j=1}^K x B_j A_j
\end{equation}

\textbf{Proof:}
The ideal activation for input $x^{(k)}$ is $x^{(k)} W_{base} + x^{(k)} B_k A_k$.
The error is the \textbf{Interference Sum} from all other clients:
\begin{equation}
    \mathcal{E}_{stack} = \left\| \frac{1}{K}\sum_{j \neq k} x^{(k)} B_j A_j \right\|_F
\end{equation}
Modeling the interference terms $z_j = x^{(k)} B_j A_j$ as independent random vectors with variance $\sigma^2$ (relative to the subspace of task $k$), the total interference variance is:
\begin{equation}
    \text{Var}(\mathcal{E}_{stack}) \approx (1-1/K) \sigma^2
\end{equation}
Thus, the Root Mean Square (RMS) error scales as:
\begin{equation}
    \mathcal{E}_{stack} \propto \frac{\sigma}{\sqrt{K}}
\end{equation}

Thus, the noise magnitude scales with $(\sqrt{K})^{-1}$.  Signal-to-Noise Ratio (SNR) vanishes as $K \to \infty$  because the signal shrinks faster than the noise.

%As the number of clients $K$ increases, the magnitude of the interference (``Cross-Talk'') grows, eventually overwhelming the signal from the single correct adapter.
%In deep split settings with hundreds of clients, this "Cross-Talk" overwhelms the signal of the single correct adapter.

\subsubsection{Part 3: \textsc{Freeze} (FFA-LoRA~\cite{sun2024improving}) (The Bias Noise)}
% \textbf{Mechanism:} A shared matrix $A_{fix}$ is frozen at initialization. Clients only update $B_k$. The update is constrained to $\Delta W_k = B_k A_{fix}$.

% \textbf{Proof:}
% This is a standard linear subspace projection problem. The optimization problem for client $k$ becomes:
% \begin{equation}
%     \min_{B_k} \| \Delta W_k^* - B_k A_{fix} \|_F^2
% \end{equation}
% Let $\mathcal{S}_A$ be the row space spanned by $A_{fix}$. The optimal solution for $B_k$ yields the orthogonal projection of the true update $\Delta W_k^*$ onto $\mathcal{S}_A$:
% \begin{equation}
%     \text{proj}_{\mathcal{S}_A}(\Delta W_k^*) = B_k^{opt} A_{fixed}
% \end{equation}
% By the Projection Theorem (Pythagorean Theorem for matrices), the squared error is the energy of the update that lies in the orthogonal complement $\mathcal{S}_A^\perp$:
% \begin{equation}
%     \mathcal{E}_{freeze}^2 = \| \Delta W_k^* - \text{proj}_{\mathcal{S}_A}(\Delta W_k^*) \|_F^2 = \| \mathcal{P}_{\mathcal{S}_A^\perp} (\Delta W_k^*) \|_F^2
% \end{equation}
% If the optimal updates $\Delta W_k^*$ for different tasks lie in disjoint subspaces (which is true for heterogeneous tasks), then for any fixed $A$ of rank $r \ll d$, there exists a significant portion of energy in $\mathcal{S}_A^\perp$. This error is irreducible regardless of training time.

% \hfill \qedsymbol

\textbf{Mechanism:} A projection matrix $A_{fixed}$ is frozen. Clients optimize $B_k$. The global model is aggregated as:
\begin{equation}
    \Delta W_{freeze} = \left( \frac{1}{K} \sum_{k=1}^K B_k \right) A_{fixed} = \bar{B} A_{fixed}
\end{equation}

\textbf{Proof:}
Let $\mathcal{S}_A$ be the row space spanned by the fixed matrix $A_{fixed}$ (rank $r$). By definition, the global update $\Delta W_{freeze}$ lies entirely within $\mathcal{S}_A$.
However, the \textbf{Ideal Global Update} $\Delta W^*_{ideal} = \frac{1}{K} \sum \Delta W_k^*$ is a summation of $K$ diverse task updates. If tasks are heterogeneous, their principal components span a union of subspaces with effective rank $R_{eff} \gg r$.
The error is the projection loss of the ideal update onto the restricted subspace $\mathcal{S}_A$:
\begin{equation}
    \mathcal{E}_{freeze} = \| \Delta W^*_{ideal} - \text{proj}_{\mathcal{S}_A}(\Delta W^*_{ideal}) \|_F
\end{equation}
Substituting the sum:
\begin{equation}
    \mathcal{E}_{freeze} = \left\| \frac{1}{K} \sum_{k=1}^K \underbrace{\left( \Delta W_k^* - B_k^{opt} A_{fixed} \right)}_{\text{Local Residuals } r_k} \right\|_F
\end{equation}
Since the local residuals $r_k$ (the parts of the tasks orthogonal to $A_{fixed}$) are unlikely to cancel out perfectly in a non-IID setting, this constitutes an irreducible \textbf{Approximation Bias}. The global model effectively "blind" to any task features that lie in the orthogonal complement $\mathcal{S}_A^\perp$.

\hfill \qedsymbol

\newpage
\section{ Analysis of Spectral Denoising}

\begin{theorem}{Spectral Denoising}{spectral_denoising}
Spectral Truncation improves the Signal-to-Noise Ratio (SNR) by projecting orthogonal noise out of the signal subspace.
\end{theorem}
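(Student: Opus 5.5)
The plan is to make Theorem~\ref{th:spectral_denoising} precise with a signal-plus-noise model, matching the setup of Theorem~\ref{th:agg_noise}. I would write the update-space aggregate used by \textsc{SVD} (Algorithm~\ref{alg:spectral_lora}) as
\begin{equation}
\Delta W_{agg} = \sum_k \alpha_k B_k A_k = S + N,
\end{equation}
where $S$ is the consensus signal of rank $r_s \le r_k$, with column space $\mathcal{U}$ and row space $\mathcal{V}$. The term $N$ collects the heterogeneous, client-specific residuals. I would model $N$ as in Part~1 of the proof of Theorem~\ref{th:agg_noise}: roughly isotropic, with $\|N\|_F^2 \approx d\,\bar\sigma^2$ spread over all $d$ directions. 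Its spectral norm then satisfies $\|N\|_2 \approx \bar\sigma\sqrt{d}\cdot c$ with $c$ small. Define $\mathrm{SNR}(X) = \|S\|_F^2/\|X - S\|_F^2$. The claim becomes
\begin{equation}
\mathrm{SNR}(\mathcal{T}_{r_k}(\Delta W_{agg})) > \mathrm{SNR}(\Delta W_{agg}),
\end{equation}
where $\mathcal{T}_{r_k}$ is the rank-$r_k$ SVD truncation.

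\textbf{Step 1: oracle projection.} First analyze the idealized projection $P_{\mathcal{U}} \Delta W_{agg} P_{\mathcal{V}}$. This leaves $S$ intact. The retained noise is $P_{\mathcal{U}} N P_{\mathcal{V}}$. Under isotropy, its expected energy is
\begin{equation}
\mathbb{E}\|P_{\mathcal{U}} N P_{\mathcal{V}}\|_F^2 \approx \frac{r_k^2}{d^2}\,\|N\|_F^2 .
\end{equation}
So the SNR improves by a factor of order $d^2/r_k^2$, or at least $d/r_k$ if only one side is projected. Everything outside the signal subspace is removed exactly; this is the ``projecting orthogonal noise out'' in the statement.

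\textbf{Step 2: Eckart--Young and subspace perturbation.} Next replace the oracle subspaces by the empirical top-$r_k$ singular subspaces. By Eckart--Young, $\mathcal{T}_{r_k}$ is the best rank-$r_k$ approximation, which gives
\begin{equation}
\|\mathcal{T}_{r_k}(\Delta W_{agg}) - S\|_F \le \|\mathcal{T}_{r_k}(\Delta W_{agg}) - \Delta W_{agg}\|_F + \|N\|_F .
\end{equation}
That bound is too weak, so I would instead use the standard estimate
\begin{equation}
\|\mathcal{T}_{r}(S+N) - S\|_F \le C\sqrt{r}\,\|N\|_2 \qquad (r \ge \mathrm{rank}\, S).
\end{equation}
This follows from Weyl's inequality together with the triangle inequality on the rank-$2r$ difference. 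Comparing it with $\|N\|_F \approx \sqrt{d}\,\|N\|_2$ for flat-spectrum noise gives an SNR gain of order $d/r_k$. Wedin's $\sin\Theta$ theorem can replace this step if one wants to state subspace alignment explicitly.

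\textbf{Step 3: interpretation.} Finally, I would connect the result to Part~1 of Theorem~\ref{th:agg_noise}. There, \textsc{Average} leaves the full flattened noise energy $\propto \|N\|_F$. \textsc{SVD} keeps only about $\sqrt{r_k/d}$ of it. This supports the ordering observed at deep cuts.

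\textbf{Main obstacle.} The hardest step is making the signal/noise split and the ``flat spectrum'' assumption rigorous. The bound in Step~2 needs $\|N\|_2$ small relative to $\sigma_{r_s}(S)$, a spectral-gap condition, and it needs $\|N\|_F \gg \sqrt{r_k}\,\|N\|_2$. The second condition holds only when the noise is genuinely high-rank. I would state both as explicit hypotheses, borrowing the isotropic cross-term model already used in Part~1. I would note that if the noise is aligned with $\mathcal{U}$, truncation gives no gain, so the theorem is conditional on this incoherence.
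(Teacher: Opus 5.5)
Your argument is correct under the hypotheses you state, and it is more rigorous than the paper's. The paper's proof stops at a one-sided version of your Step~1. It takes $N=\epsilon G$ isotropic of rank $d$ and asserts that, for signal $\gg$ noise, the top-$r$ singular subspace of $S+N$ coincides with the signal subspace. It then counts that an $r$-dimensional subspace carries a fraction $r/d$ of the noise energy, giving $\mathrm{SNR}_{out}\approx (d/r)\,\mathrm{SNR}_{in}$ and the quoted $256\times$.

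Your two-sided oracle count $d^2/r_k^2$ is not what truncation attains. The perturbed empirical subspaces also pick up the cross terms $P_{\mathcal{U}}NP_{\mathcal{V}}^{\perp}$ and $P_{\mathcal{U}}^{\perp}NP_{\mathcal{V}}$. These bring the first-order error back to order $r_k/d$ of $\|N\|_F^2$, which is why the paper's one-sided count happens to land on the right order.

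Your Step~2 supplies what the paper lacks. Weyl gives $\sigma_{r_k+1}(S+N)\le\|N\|_2$, so $\mathcal{T}_{r_k}(S+N)-S$, which has rank at most $2r_k$, has spectral norm at most $2\|N\|_2$ and Frobenius norm at most $2\sqrt{2r_k}\,\|N\|_2$. This certifies a gain of at least $\|N\|_F^2/(8r_k\|N\|_2^2)$, with signal distortion counted in the error; the paper's $\mathrm{SNR}_{out}$ ignores it. For square Gaussian noise, $\|N\|_F^2/\|N\|_2^2\approx d/4$, so the certified gain is roughly $d/(32r_k)$. That is about $8\times$ at $d=4096$, $r_k=16$, against the paper's heuristic $256\times$.

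This bound holds for every $N$, so you do not need the spectral-gap condition $\|N\|_2\ll\sigma_{r_s}(S)$ that you list. The only hypothesis actually used is that the stable rank of $N$ exceeds $8r_k$. The gap condition, via Wedin, matters only for the interpretive clause that the retained directions are the signal directions.

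The paper's route buys brevity and a concrete number. Yours buys an honest treatment of the empirical SVD and makes explicit that the gain is governed by the effective rank of the noise relative to $r_k$. The paper hides that dependence inside its rank-$d$ isotropy assumption.
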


\begin{proof}
Let the true signal be $S$ (rank $r$) and the noise be $N$ (rank $d$, resulting from incoherent aggregation).
We assume a standard signal processing model where noise is isotropic (spherical): $N = \epsilon G$, where $G$ has i.i.d. Gaussian entries and $\epsilon$ is the noise magnitude.

\textbf{1. Energy Distribution:}
The total energy of the noise is $\mathbb{E}[\|N\|_F^2] = d^2 \epsilon^2$.
However, because the noise is high-rank (Theorem \ref{th:spectral_denoising}), this energy is spread roughly equally across all $d$ dimensions of the space.
The energy contained within any $r$-dimensional subspace is proportional to the fraction of dimensions:
$$ E_{projected} \approx \frac{r}{d} E_{total} = \frac{r}{d} (d^2 \epsilon^2) = r d \epsilon^2 $$

\textbf{2. The Filtering Operation:}
Spectral LoRA projects the data onto the top-$r$ singular vectors. Since $S$ is rank-$r$, it lies almost entirely within this subspace (assuming signal $\gg$ noise).
The retained noise is only the component of $N$ that aligns with $S$.
The retained noise energy is reduced by a factor of $\frac{r}{d}$.

\textbf{3. SNR Improvement:}
Let $\text{SNR}_{in} = \frac{\|S\|_F^2}{\|N\|_F^2}$.
The output SNR after SVD is:
$$ \text{SNR}_{out} = \frac{\|S\|_F^2}{\|N_{projected}\|_F^2} \approx \frac{\|S\|_F^2}{\frac{r}{d} \|N\|_F^2} = \frac{d}{r} \text{SNR}_{in} $$
For a typical LLaMA model ($d=4096$) and LoRA rank ($r=16$), the SNR improvement is $\frac{4096}{16} = 256\times$.
This theoretical gain explains why Spectral LoRA prevents the degradation of performance at deep cuts, where naive averaging fails.
\end{proof}

\newpage
\section{Complexity Analysis}
\label{app:complexity_analysis}
\begin{table}[h]
\caption{\textbf{Computational complexity in Split-Federated Fine-Tuning (SFF).}
Per-epoch local training cost is denoted by $\mathcal{T}(\cdot)$ and scales linearly with the number of trainable Transformer blocks.
$\ell_c$ is the client-side cut depth (number of blocks executed on the client), and $L-\ell_c$ blocks are executed on the main server.
$m$ is the embedding size, $n$ is the context length,  $|D_k|$ is the number of local training samples, $K$ is the number of clients, $r_k$ is the LoRA rank on client $k$, $r=\sum_{k=1}^K r_k$, and $r_s$ denotes the LoRA rank used for server-side adapters.
Fed server costs correspond to aggregating \emph{client-side} LoRA (replace $\ell_c$ by $L$ if LoRA is applied to all layers).}
\label{tab:comp-cost-sff}
\begin{center}
\begin{small}
\begin{sc}
\resizebox{\textwidth}{!}{
\begin{tabular}{lccc}
\toprule
Method & Client & Main Server (split tail) & Fed Server (aggregation) \\
\midrule
Full FT &
$\mathcal{O}\!\left(E \cdot \mathcal{T}(m,n,|D_k|;\ell_c)\right)$ &
$\mathcal{O}\!\left(E \cdot \mathcal{T}(m,n,r_s,|D_k|;L-\ell_c)\right)$ &
$\mathcal{O}(l_cK m n)$ \\

Average &
$\mathcal{O}\!\left(E \cdot \mathcal{T}(m,n,r_k,|D_k|;\ell_c)\right)$ &
$\mathcal{O}\!\left(E \cdot \mathcal{T}(m,n,r_s,|D_k|;L-\ell_c)\right)$ &
$\mathcal{O}\!\left(\ell_c (m+n)\, r\right)$ \\

Freeze &
$\mathcal{O}\!\left(E \cdot \mathcal{T}(m,n,r_k,|D_k|;\ell_c)\right)$ &
$\mathcal{O}\!\left(E \cdot \mathcal{T}(m,n,r_s,|D_k|;L-\ell_c)\right)$ &
$\mathcal{O}\!\left(\ell_c\, n\, r\right)$ \\

Stack &
$\mathcal{O}\!\left(E \cdot \mathcal{T}(m,n,r_k,|D_k|;\ell_c)\right)
+ \mathcal{O}\!\left(\ell_c\, m\, n \sum_{k=1}^K r_k\right)$ &
$\mathcal{O}\!\left(E \cdot \mathcal{T}(m,n,r_s,|D_k|;L-\ell_c)\right)$ &
None \\

SVD &
$\mathcal{O}\!\left(E \cdot \mathcal{T}(m,n,r_k,|D_k|;\ell_c)\right)$ &
$\mathcal{O}\!\left(E \cdot \mathcal{T}(m,n,r_s,|D_k|;L-\ell_c)\right)$ &
$\mathcal{O}\!\left(\ell_c\, m n\, r\right)
+ \mathcal{O}\!\left(\ell_c \min(m,n)\, m n\right)
+ \mathcal{O}\!\left(\ell_c\, m\, r^2\right)$ \\
\bottomrule
\end{tabular}
}
\end{sc}
\end{small}
\end{center}
\end{table}

\begin{table}[h]
\caption{\textbf{Communication overhead per round in Split-Federated Fine-Tuning (SFF).}
We decompose communication into (i) federated LoRA parameter exchange between clients
and the Fed server, and (ii) bidirectional split-learning traffic between clients
and the main server.
$I$ denotes the number of local client optimization steps per round, B is the batch size; each step
incurs transmission of cut activations and gradients of size $\Theta(nm)$. \textbf{Observation:} Split-learning traffic scales with the full activation dimension
$\Theta(Bnm)$ and typically dominates communication cost, exceeding LoRA parameter exchange with fed server by orders of magnitude for practical ranks.}

\label{tab:comm-cost-sff}
\begin{center}
\begin{small}
\begin{sc}
\resizebox{\textwidth}{!}{
\begin{tabular}{lccc}
\toprule
Method &
Client$\rightarrow$Fed (Upload LoRA) &
Fed$\rightarrow$Client (Download LoRA) &
Client$\leftrightarrow$Main (Split traffic) \\
\midrule
Full FT &
$\mathcal{O}\!\left(\ell_c K \cdot m n\right)$ &
$\mathcal{O}\!\left(\ell_c K \cdot m n\right)$ &
$\mathcal{O}\!\left(K I B \cdot n m\right)$ \\

Average &
$\mathcal{O}\!\left(\ell_c (m+n)\, r\right)$ &
$\mathcal{O}\!\left(\ell_c K (m+n)\, \max(r_k\right))$ &
$\mathcal{O}\!\left(K I B \cdot n m\right)$ \\

Freeze &
$\mathcal{O}\!\left(\ell_c\, n\, r\right)$ &
$\mathcal{O}\!\left(\ell_c K\, n\, \max(r_k\right))$ &
$\mathcal{O}\!\left(K I B \cdot n m\right)$ \\

Stack &
$\mathcal{O}\!\left(\ell_c (m+n)\, r\right)$ &
$\mathcal{O}\!\left(\ell_c K (m+n)\, r\right)$ &
$\mathcal{O}\!\left(K I B \cdot n m\right)$ \\

SVD &
$\mathcal{O}\!\left(\ell_c (m+n)\, r\right)$ &
$\mathcal{O}\!\left(\ell_c (m+n)\, r\right)$ &
$\mathcal{O}\!\left(K I B \cdot n m\right)$ \\
\bottomrule
\end{tabular}
}
\end{sc}
\end{small}
\end{center}
\end{table}

\newpage
\section{Empirical Privacy Evaluation via Inversion Attacks}
\label{app:eia}
 
We quantify information leakage at each cut layer by training adversarial
inversion models that attempt to reconstruct input tokens from the
cut-layer hidden states exposed to the server.
 
\paragraph{Setup.}
We evaluate GPT-2 Small (12 layers, hidden size 768), Medium (24 layers,
1024), and Large (36 layers, 1280) under a split inference setting. Given
an input sequence $x = (x_1, \dots, x_T)$, the client computes the
intermediate representation at cut layer $\ell_c$:
\begin{equation}
    \mathbf{z}^{(\ell_c)} = f_{\leq \ell_c}(x) \in \mathbb{R}^{T \times D},
\end{equation}
where $f_{\leq \ell_c}$ denotes the Transformer truncated at layer $\ell_c$
and $D$ is the hidden dimension. These representations are treated as the
observable interface accessible to an adversary, the sole information
the server receives about the client's private input.
 
\paragraph{Inversion model.}
For each cut layer, we train a separate adversarial inversion model
$g_\theta : \mathbb{R}^D \to \{1, \dots, V\}$, implemented as a token-wise
MLP with an input projection ($D \to H$), a ReLU non-linearity, and an
output layer ($H \to V$), where $V$ is the vocabulary size. Training data
consists of aligned representation--token pairs:
$\mathcal{D}_{\ell_c} = \{(\mathbf{z}_t^{(\ell_c)}, x_t)\}_{t=1}^N$,
constructed by passing sequences through the frozen truncated model and
removing padding tokens. The model is trained with cross-entropy loss:
\begin{equation}
    \mathcal{L} = -\mathbb{E}_{(\mathbf{z}, x) \sim \mathcal{D}_{\ell_c}}
    \log P_\theta(x \mid \mathbf{z}),
\end{equation}
using AdamW for a fixed number of epochs. This setup assumes a strong
adversary with access to representation--token pairs and knowledge of the
data distribution, a worst-case leakage scenario.
 
\paragraph{Metrics.}
After training, the inversion model is evaluated on the same representation
distribution. We measure the \emph{Recovery Rate} (RR), defined as token
reconstruction accuracy:
\begin{equation}
    \mathrm{RR} = \frac{1}{N} \sum_{i=1}^{N}
    \mathbf{1}[\hat{x}_i = x_i], \quad
    \hat{x}_i = \arg\max_{v} P_\theta(v \mid \mathbf{z}_i).
\end{equation}
We report \emph{Empirical Privacy} EP $= (1 - \mathrm{RR}) \times 100$,
normalized to $[0, 100]$. A score of 0 indicates perfect token
reconstruction by the attacker (no privacy); a score of 100 indicates
complete failure of reconstruction (full privacy). This metric is computed
independently for each cut layer, yielding the depth-vs-privacy curves
reported in Figure~\ref{fig:ep_vs_depth}.

 \begin{figure}[b]
    \centering
\includegraphics[width=0.50\linewidth]{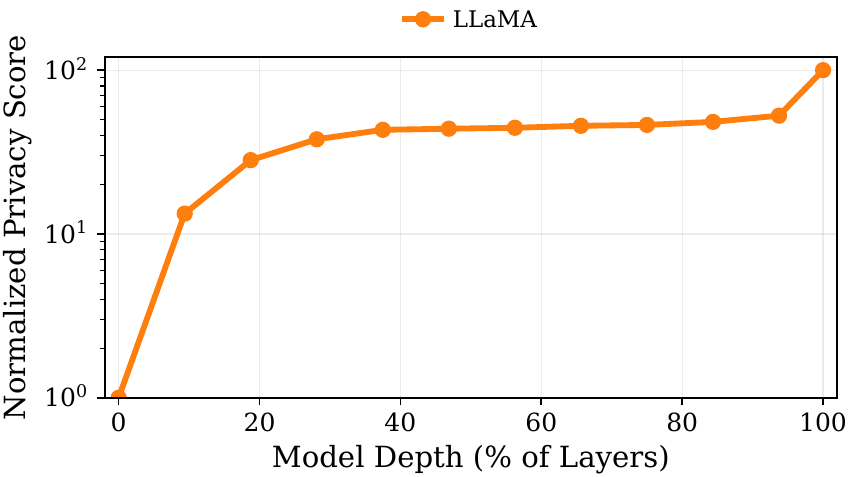}
    \caption{\textbf{Normalized Privacy Score vs.\ partition depth for
    Llama-3-8B-Instruct.} Score${=}(1{-}\mathrm{RR}){\times}100$ (log-scale
    $y$-axis), where RR is the token recovery rate of the adversarial MLP
    inversion model. As with the GPT-2 family (Figure~\ref{fig:ep_vs_depth}),
    leakage decays with depth. %: the score rises steeply through the early layers and saturates at 100 at the deepest cut, confirming the depth--privacy relationship holds at 8B scale.
    }
    \label{fig:ep_vs_depth_llama}
\end{figure}

\paragraph{Extension to Llama-3-8B.}
We repeat the inversion protocol on Llama-3-8B-Instruct (32 layers, hidden
size 4096) using the same MLP attacker and worst-case white-box assumptions.
Figure~\ref{fig:ep_vs_depth_llama} shows the same qualitative trend observed
for the GPT-2 family: the normalized privacy score increases with depth,
rising steeply through the early layers, plateauing across the middle depths,
and saturating at 100 at the deepest cut. This confirms that the
depth--privacy relationship is not an artifact of the GPT-2 architecture but
holds at 8B scale, consistent with the data-processing view that successive
Transformer blocks can only reduce the mutual information $I(x;\mathbf{h}_{\ell_c})$
between the cut-layer representation and the private input.

\newpage
\section{Additional Experiments}
\label{app:add_exp}

\begin{figure*}[h!]
    \centering
    
    \begin{subfigure}[t]{0.48\linewidth}
        \centering
        \includegraphics[width=\linewidth]{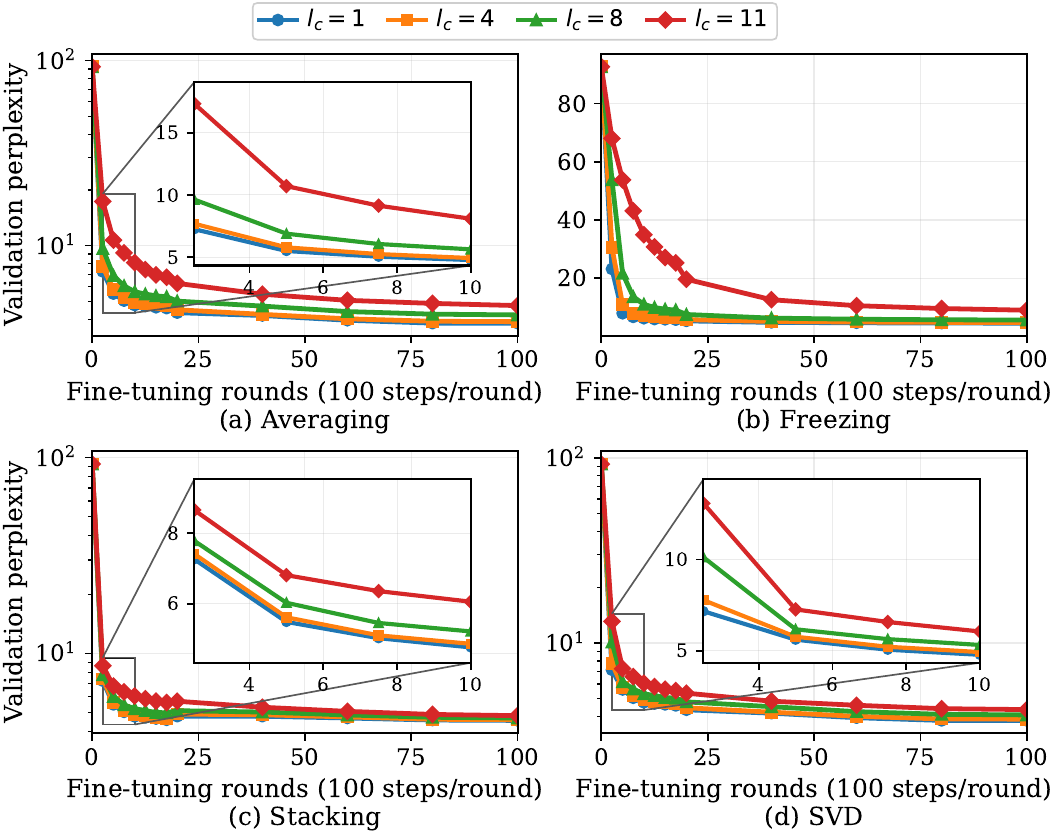}
        \caption{GPT-2 Small}
        \label{fig:ppl_convergence_sm}
    \end{subfigure}
\hfill
    \begin{subfigure}[t]{0.48\linewidth}
        \centering
        \includegraphics[width=\linewidth]{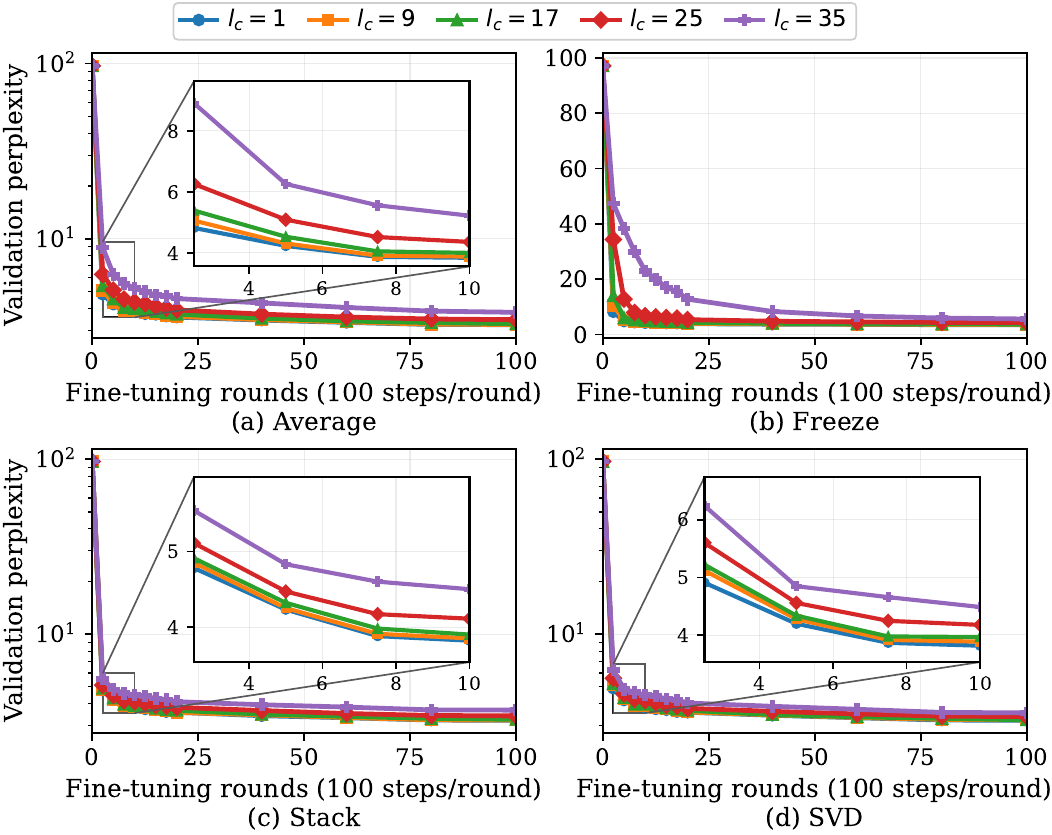}
        \caption{GPT-2 Large}
        \label{fig:ppl_convergence_s}
    \end{subfigure}
    \caption{\textbf{Convergence of GPT-2 Small and Large in heterogeneous setting}
Validation perplexity (PPL; lower is better) vs.\ rounds (100 steps/round) for cut depth $l_c$.
Main panels use log-$y$ (except Freezing in (b), linear-$y$); insets (when shown) zoom early training with linear-$y$.
\textbf{Observation:} Deeper cuts slow convergence and worsen final PPL; Stacking and SVD are consistently more stable than Averaging and Freezing.}

    \label{fig:convergence-GPT2S-L}
\end{figure*}

\begin{table*}[h]
\caption{Comparison of average E2E scores of GPT2-S, GPT2-M and GPT2-L across cutlayers for ranks 8 and 16 under homogenous settings. We report a unified E2E score computed as the arithmetic mean of per-metric min–max normalized scores across ROUGE-L, METEOR, BLEU, CIDEr, and NIST.}
\label{tab:depth_audit}
\centering
\small
\begin{sc}
\setlength{\tabcolsep}{3.5pt}     % Tightens internal columns slightly to fit page

\begin{tabular}{l cccc @{\hskip 0.2in} cccc @{\hskip 0.2in} cccc}
\toprule
% Top Header
& \multicolumn{4}{c}{\textbf{Shallow Cut} ($L/4$)} 
& \multicolumn{4}{c}{\textbf{Middle Cut} ($L/2$)} 
& \multicolumn{4}{c}{\textbf{Deep Cut} ($3L/4$)} \\

% Spanning Lines (The 'lr' trims them so they don't touch)

% Sub-headers
\textbf{Model} 
& \textbf{Avg} & \textbf{Frz} & \textbf{Stk} & \textbf{SVD} 
& \textbf{Avg} & \textbf{Frz} & \textbf{Stk} & \textbf{SVD} 
& \textbf{Avg} & \textbf{Frz} & \textbf{Stk} & \textbf{SVD} \\
\midrule

% Data Rows
\textbf{GPT2-S} ($r\!=\!8$)  
& 57.4 & 53.3 & 57.2 & \textbf{57.9} 
& 57.9 & \textbf{58.6} & 53.1 & 58.0 
& \textbf{57.9} & 56.7 & 53.5 & 56.1 \\

\textbf{GPT2-S} ($r\!=\!16$) 
& \textbf{57.1} & 53.9 & 56.5 & 56.0 
& 56.0 & 56.4 & 53.5 & \textbf{58.0} 
& 56.6 & \textbf{56.8} & 53.0 & 52.8 \\

\addlinespace[0.5em] % Adds a subtle gap between Model Sizes

\textbf{GPT2-M} ($r\!=\!8$)  
& \textbf{60.3} & 56.2 & 58.5 & 59.8 
& 59.8 & \textbf{59.9} & 56.3 & 57.7 
& \textbf{60.4} & 59.7 & 56.6 & 59.2 \\

\textbf{GPT2-M} ($r\!=\!16$) 
& 58.4 & 55.9 & 58.5 & \textbf{58.7} 
& \textbf{58.7} & 57.6 & 56.0 & 57.4 
& \textbf{57.7} & 57.5 & 56.5 & 57.6 \\

\bottomrule
\end{tabular}
\end{sc}
\end{table*}

\newpage
% The NeurIPS paper checklist is optional in a preprint. Keep it, or comment
% out the line below if you prefer a checklist-free arXiv version.
% \input{checklist.tex}
\end{document}